\newtheorem{theorem}{Theorem}[section]
\newtheorem{definition}{Definition}[section]
\definecolor{myblack}{RGB}{0, 0, 0}
\definecolor{titlebackground}{RGB}{0, 0, 0}
\newcommand{\user}[1]{\textcolor{myblack}{\textbf{User:} #1}\par}
\newcounter{promptcounter}
\newenvironment{prompt}[1][]
  {\refstepcounter{promptcounter}
   \begin{tcolorbox}[
    colframe=myblack,
    colback=white,
    coltitle=white,
    colbacktitle=titlebackground,
    fonttitle=\bfseries,
    title=Prompt \thepromptcounter: #1,
    enhanced,
    arc=0mm,
    attach boxed title to top left={
      xshift=1cm,
      yshift*=-\tcboxedtitleheight/2
    },
    boxed title style={
      arc=0mm,
      colframe=titlebackground
    },
    top=12pt,
    before upper={\parskip10pt}
   ]
  }
  {\end{tcolorbox}}
\title{The Challenge of Using LLMs to Simulate Human Behavior:\\ A Causal Inference Perspective}
\author{George Gui and Olivier Toubia\thanks{George Gui (zg2467@gsb.columbia.edu) is an Assistant Professor of Marketing and Olivier Toubia (ot2107@gsb.columbia.edu) is the Glaubinger Professor of Business at Columbia Business School. The authors would like to thank Jia Li and Angela Qianya Wang for their research assistance.}}
\date{November 22, 2025}
\begin{document}

\maketitle

\begin{abstract}
Large Language Models (LLMs) have shown impressive potential to simulate human behavior. We identify a fundamental challenge in using them to simulate experiments: when LLM-simulated subjects are blind to the experimental design (as is standard practice with human subjects), variations in treatment systematically affect unspecified variables that should remain constant, violating the unconfoundedness assumption. Using demand estimation as a context and an actual experiment with 40 different products as a benchmark, we show this can lead to implausible results. While confounding may in principle be addressed by controlling for covariates, this can compromise ecological validity in the context of LLM simulations: controlled covariates become artificially salient in the simulated decision process. 
We show formally that confoundness stems from ambiguous prompting strategies. Therefore, it can be addressed by developing unambiguous prompting strategies through unblinding, i.e., revealing the experiment design in LLM simulations. Our empirical results show that this strategy consistently enhances model performance across all tested models, including both out-of-box reasoning and non-reasoning models. We also show that it is a technique that complements fine-tuning: while fine-tuning can improve simulation performance, an unambiguous prompting strategy makes the predictions robust to the inclusion of irrelevant data in the fine-tuning process. 
\end{abstract}

\newpage 

\section{Introduction}

The rise of large language models (LLMs) like GPT has sparked interest across disciplines (including marketing, computer science, economics, psychology and political science) in leveraging these tools to simulate how humans would respond to questions or stimuli in different contexts \citep{aher_using_2023,argyle_out_2023, arora2024express, binz2025foundation,dillion_can_2023,goli2024frontiers, hewitt2024predicting, horton_large_2023,park_generative_2023,qin2024aiturk,toubia2025twin}. If these LLM-simulated experiments accurately approximate real human behavior, the implications for both academics and practitioners are substantial. Academics could use LLMs for pilot experiments to pinpoint stimuli with significant impact, thus improving the efficiency of theory development and experimental design. In fact, several review articles have already been written about this recent yet fast-growing literature \citep{bail2024can,hutson2023guinea,sarstedt2024using,simmons2023large}. Firms could leverage these realistic simulations to explore different ideas and strategies, thereby improving customer insight and product development and optimizing marketing mix variables. Accordingly, in the recent past we have witnessed a large influx of firms offering services leveraging LLMs for customer insights (e.g., \href{https://www.syntheticusers.com/}{Synthetic Users}, \href{https://outset.ai/}{Outset AI}, \href{https://www.nexxt.in/}{Nexxt}, \href{https://www.voxpopme.com/}{Voxpopme}, \href{https://www.evidenza.ai/}{Evidenza}, \href{https://www.expectedparrot.com/}{Expected Parrot}, \href{https://meaningful.app/}{Meaningful}, \href{https://xpolls.ai/}{xPolls}, \href{https://www.ipsos.com/en-uk/large-language-models-market-research}{Ipsos}, \href{https://www.pyxis.ai/}{Pyxis} and \href{https://www.generationlab.org/}{Generation Lab}).

In this paper we focus specifically on leveraging LLMs to simulate counterfactual human behavior. Indeed, many practical applications of LLM simulations involve exploring counterfactual scenarios where one factor is altered while others remain constant. For example, what would happen to sales if prices were increased, how would consumers respond if a message were framed differently, or how would market dynamics shift with the introduction of a new product? 

Early evidence on LLMs' ability to simulate counterfactual human behavior is encouraging but mixed. For example, across 70 nationally-representative U.S. studies, \cite{hewitt2024predicting} find that the correlation between predicted and observed treatment effect size is large ($r=0.85$), but the raw predictions derived from GPT-4 systematically overestimated actual effects, leading to a large (relative to human forecasters) RMSE between predicted and observed effects. Across 11 behavioral economics experiments, \cite{toubia2025twin} find that LLM simulations achieve high predictive accuracy at the individual level, but replicated only about half of the treatment effects observed at the aggregate level. 

The ability of LLMs to simulate counterfactuals will likely improve as more powerful models are developed and the size and quality of the training and validation \emph{data} continues to increase \citep{ludwig2024large, wang2024large}. In this paper, we focus on another key factor, which may be in fact much easier and cheaper to address and which is under the control of any user: the \emph{design} of LLM experiments. We identify a fundamental issue that arises when simply applying best practices from human experiments to LLM experiments. This issue is not specific to a particular model or training dataset and, if left unaddressed, it is likely to cause some amount of residual bias in LLM simulations even as accuracy improves with better models and data. At the same time, we find this issue can be alleviated by carefully adjusting experimental best practices to the world of LLMs.  

In particular, many LLM simulations to date have relied on blinded experimental designs, where simulated participants are not aware of the experimental conditions or hypotheses being tested \citep{argyle_out_2023,brand_using_2023,hewitt2024predicting,horton_large_2023, toubia2025twin}. Blinding\footnote{Also sometimes referred to as ``masking'' \citep{american2019publication}.} has many benefits in human subject research: it reduces demand effects where participants modify their behavior based on what they believe researchers want to see, and it helps maintain ecological validity by presenting scenarios that more closely mirror real-world decision contexts. Following this convention, researchers typically construct LLM prompts that present a single scenario without revealing its experimental nature, similar to how human participants would experience the treatment condition in a between-subjects design. 

Although blinding is considered a best practice in human-subject experiments, we show that it can have unintended consequences in LLM simulations, which may no longer justify its benefits. Unlike human subjects and contexts that exist independently of the experiment, LLM-simulated subjects and contexts are "created on the fly" based on prompts. 
When researchers make LLM-simulated ``subjects'' blind to experimental conditions, variations in the treatment can systematically affect unspecified variables that should remain constant, violating the unconfoundedness assumption typically required for valid causal inference. 

We use demand estimation as an example to empirically illustrate this challenge. We conduct experiments simulating consumer demand for 40 different products at different price points. We find that altering the product's price not only affects the purchase outcomes but also systematically influences other variables that should not be influenced by the treatment—such as pre-treatment variables (e.g., demographic characteristics and customer purchase history) and contextual factors (e.g., price of competing products). This unintended variation in unspecified covariates violates the unconfoundedness assumptions, and in our case it also generates implausible demand curves when compared against an actual experiment with a representative sample of human participants. 

A tempting solution is specifying more variables in the prompts to address this confounding issue. However, we find this approach may sacrifice ecological validity because including more covariates can make the scenarios presented to the LLM less reflective of real-world decision-making contexts. For instance, when competing product information is explicitly mentioned in prompts, these variables become artificially salient in the LLM's decision process, unlike typical consumers who may not always consider such information during routine purchases. This focalism issue creates a fundamental tension between controlling for confounding factors and maintaining ecological validity - a tradeoff that typically does not exist in human-subject experiments where covariates can be measured and controlled for during \textit{data analysis} without making them artificially salient to participants during \textit{data generation}.

We present a theoretical framework illustrating that confoundedness in blind LLM simulations stems from prompts being \emph{ambiguous}. To overcome this fundamental limitation and improve the simulation results, researchers should consider relaxing the blinding requirement and informing the LLM of the experimental design. This approach is not subject to the same confounding issues and holds promise for better generalizable performance as LLMs continue to advance. 

More broadly, we aim to contribute to the growing literature on LLM simulations by helping establish best practices in their design. 
Although extensive literature on experimental design with human subjects has established best practices and universal standards—such as random assignment and blinding—it remains unclear what constitutes best practices in designing LLM simulations and how these might differ from human-subject experimental design principles. Establishing LLM experimental design principles is crucial for enhancing the reliability and validity of LLM simulation results, thereby making them more valuable for both researchers and practitioners. Without clear guidelines on which experimental designs to adopt or avoid, the vast flexibility and sensitivity in LLM simulations can lead to widely varying outcomes \citep{carlson2024use,gao2024take,ludwig2024large, mohammadi2024wait}.  When different simulation approaches yield conflicting results, researchers lack a basis for determining how much weight to place on each finding. By developing standardized design principles, researchers can assign greater weight to well-designed simulations, enhancing the reliability and usefulness of the results. Developing such design principle is essential for advancing the field and ensuring that LLM simulations effectively contribute to our understanding of human behavior.
In particular, by highlighting the trade-offs between blinding, unconfoundedness, and ecological validity, we offer insights that can help researchers and practitioners design more reliable and valid LLM-based experiments. In doing so, our work underscores the importance of adjusting human-subject experimental design principles to the unique characteristics of LLMs, ensuring that these powerful tools can be effectively harnessed to simulate and understand human behavior.

The rest of the paper is organized as follows. Section \ref{sec:motivating_example}  empirically demonstrates the unintended consequences of blinding using demand estimation as an example. Section \ref{sec:control} explores the standard solution of controlling for covariates and highlights its unique challenges in the context of LLM simulations. Section \ref{sec:theory} argues theoretically that this fundamental issue will persist for future models as long as ambiguous prompting strategies are used. Section \ref{sec:unblinding} discusses promising directions for making prompts unambiguous through unblinding, and Section \ref{sec:conclusion} concludes. 

\section{Unintended Confounding in Blind LLM Simulations}\label{sec:motivating_example}

The principle of blinding is a cornerstone of experimental design that consists in withholding information from the participants, such as hiding the design of the experiment or the hypotheses being tested. It is widely used in between-subject experiments with human participants, and is also sometimes implemented in within-subject experiments.\footnote{For example, in a blind taste test comparing Coca-Cola and Pepsi, participants evaluate both beverages (within-subject design) but remain blind to which drink is which, the hypothesis about brand preferences, the tasting order, and the total number of samples they'll evaluate.} 
Blinding helps reduce demand effects and maintains ecological validity by presenting scenarios that mirror real-world decision contexts. This section uses GPT4, demand estimation and one common LLM simulation design to illustrate how blinding creates unique challenges in LLM simulations that are absent in human experiments, and Section \ref{sec:theory} generalizes the findings to other versions of LLMs, other applications, and other simulation designs.

One common way to implement blinding in LLM simulations is to conduct \textit{between-prompt experiments}, where each prompt represents a single experimental condition without revealing the broader experimental context \citep{argyle_out_2023,brand_using_2023,hewitt2024predicting,horton_large_2023,toubia2025twin}. Blinded between-prompt LLM experiments have been used to mimic both between-subject and within-subject experiments with human participants. For example, to mimic a between-subject experiment that studies price sensitivity, researchers might present different prices to the LLM in separate prompts, with each prompt describing only the current price scenario without mentioning the experimental nature of the price variation. To mimic a within-subject conjoint analysis study, researchers might present different product profiles or choice sets to the LLM in separate prompts, with each prompt describing only the current product profile or choice set.

While random assignment in human-subject experiments ensures pre-treatment variables are balanced across conditions and contextual variables remain constant (even when unobserved), LLM-simulated participants and environments are generated dynamically based on prompts. When researchers leave details unspecified in blind LLM prompts, the model may make (implicit or explicit) assumptions about these unspecified variables, which can systematically vary with the treatment and introduce confounding factors. 

We illustrate these issues using demand estimation, a classical problem in economics and marketing, where researchers are often interested in understanding how changes in a product's price affect demand \textit{while holding everything else constant}. We select the top 40 Consumer Packaged Goods (CPG) categories according to \cite{dellavigna_uniform_2019}, and for each category we pick one of the top selling products according to Walmart.com. We record the regular price of each product on Walmart.com around April, 2024. Table \ref{tab:category_product} in the Web Appendix documents this set of 40 products. For each product, we vary the price from 0 to 200\% of the regular price, in 20\% increments. This gives us 11 price points to test for each product ($\{0,20\%,..100\%,..200\%\}$ of the regular price). Throughout the paper, for each prompt template and for each \{category,product\} pair described in Table \ref{tab:category_product}, we vary the price systematically across these 11 price points. We refer to ``relative price'' as the price relative to the regular price ($\{-1,-0.8,..., 0,..., 0.8, 1\}$ where $0$ is the regular price). We use the Open AI API and we ``pin" the model to gpt-4o-mini-2024-07-18 (the latest available at the time of running the experiment) to make sure that the exact same model is used for all our simulations. We adjust the ``System'' part of the prompt to increase the probability of receiving a usable answer from the model (the ``System" portion of the prompt, which can be manipulated on the API, provides meta-instructions to GPT). Unless noted otherwise, for each prompt that corresponds to a \{category, product, price\} combination, we set the temperature to 1 and generate 50 draws to calculate the average response.  

First, we elicit unspecified variables from the LLM. For each \{category, product, price\} combination, we submit a prompt that asks GPT to fill in the blank contextual or pre-treatment variables given the randomized price \citep{kurita2019measuring, nozza2022pipelines, zhang2020hurtful}. Prompt \ref{prompt:past_price} gives an example of such template that asks the past price given the current randomized price. 

\begin{prompt}[Unintended correlation between price and past price]\label{prompt:past_price}
\textbf{System: } You, AI, are a customer. Your task is to fill in the blanks \textunderscore \textunderscore.        
            Return the completed information in comma-separated values, without any extra text.
            
\textbf{User: } Please consider the following product category: \{category\}. 

Suppose you are in a grocery store, and you see the following product in that category: \{product\}. 

The last time you purchased this product, it was priced at \$\underline{\hspace{1cm}} [a number with up to 2 decimal points].

The product is currently priced at: \textcolor{red}{\{price\}}. 

Return example 1: XX.XX
\end{prompt}

Figure \ref{fig:vanilla_correlation_key_variables} demonstrates positive correlations between the price of the focal product and several variables, including the price that the customer paid last time, the price of the competing product, and the expiration days of the product. The x-axis is price relative to the baseline price, allowing us to aggregate across products.\footnote{
Prompts \ref{prompt:competing_price} and \ref{prompt:expiration_days} in Web Appendix \ref{appendix:vanilla_correlation_prompt} document the prompts used to elicit the correlations between randomized price and competing price/expiration date.} Web Appendix \ref{appendix:covariance} presents additional analysis demonstrating the broader scope of these unintended correlations across multiple variables. While these correlations are plausible in observational data, they are undesirable in an experiment where researchers are interested in understanding what happens when changing the price of the focal product while \textit{holding everything else constant.} 

\begin{figure}
    \centering
    \includegraphics[width=1\linewidth]{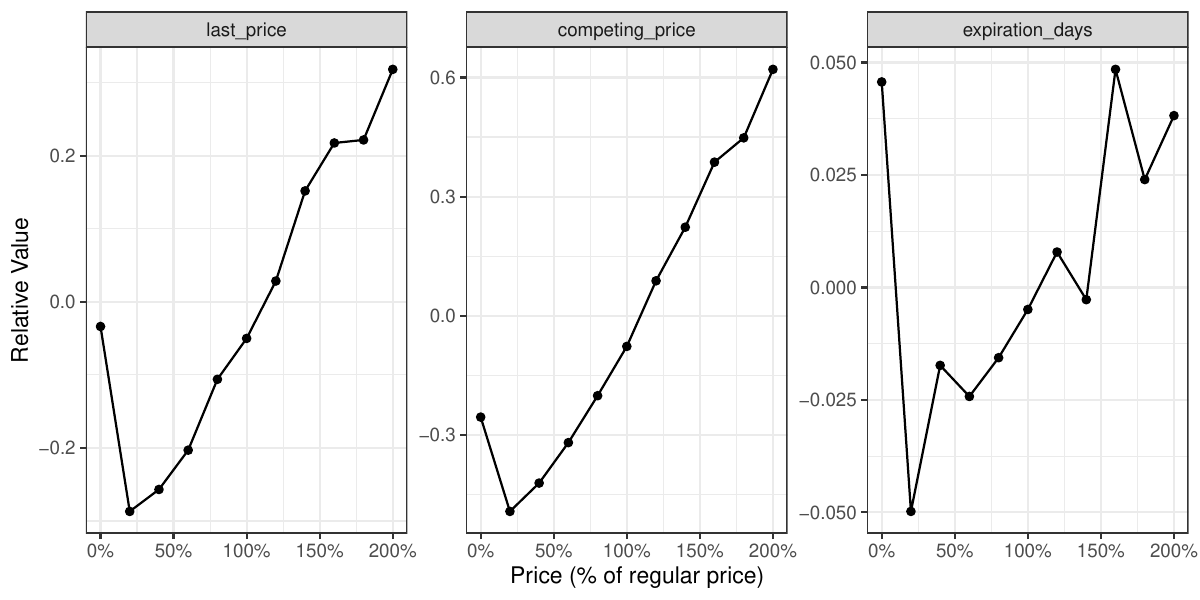}
    \caption{Unintended correlation between price and past price, competing price, and expiration days}
    \label{fig:vanilla_correlation_key_variables}
\end{figure}

Next, we explore whether these unintended correlations are associated with biased estimates of relevant quantities when the goal is simulate counterfactual human behavior. We run an actual survey on human subjects, using a similar prompt in an LLM simulation, and compare the results. The survey with human participants (pre-registered, see https://aspredicted.org/pqys-st6w.pdf) was developed using Qualtrics. Each respondent answered one purchase intention question for each of the 40 products (see Appendix \ref{appendix:category_and_product} for the complete list of categories, products, and prices), with prices randomly drawn for each product and the order of products randomized across respondent.\footnote{We acknowledge that one difference between the study run with humans vs. GPT is that each human answered 40 questions, while on GPT a different independent call to the API was made for each question (following standard practice).} Following our pre-registration, we collected data from 1,000 respondents, and excluded 9 respondents according to pre-registered exclusion criteria (based on attention filters at the beginning of the survey). The respondents were recruited from Prolific, where we paid a premium to obtain a representative sample of the US population (based on census data). We then use Prompt \ref{prompt:vanilla_prompt} to simulate potential purchase decisions at different price points (where again \{category,product\} and \{price\} were varied according to our experimental design, mirroring the experiment with humans), and for each corresponding prompt, we used gpt-4o-mini-2024-07-18, with temperature set to 1, to generate 50 draws to calculate the average purchase probability at each \{category,product,price\} combination. Figure \ref{fig:vanilla_comparison} shows the average demand curves (averaged over the 40 products) obtained from GPT and from human participants. While the curves elicited from human participants follow a classic downward-sloping demand pattern, the GPT-simulated curves follow an inverted-U shape. Although it is understandable that it is challenging for GPT to predict demand when price equals 0 as such scenario is probably not well represented in its training data, the curve remains inverted-U shaped even if we remove the first point corresponding to a price of zero.

\begin{prompt}[Ask Purchase]\label{prompt:vanilla_prompt}
\textbf{System:} You, AI, are a customer. Your task is to fill in the blanks \rule{1cm}{0.15mm}.
Return the completed information in comma-separated values, without any extra text.\\
\textbf{User:} Please consider the following product category: \{category\}.

Suppose you are in a grocery store, and you see the following product in that
category: \{product\}. 

The product is currently priced at \$\{Price\}. Would you or would you not purchase the product? \rule{1cm}{0.15mm} [``purchase'' or ``not purchase'']

Return example:
{purchase}
\end{prompt}

\begin{figure}[htbp!]
    \centering
    \includegraphics[scale = 0.8]{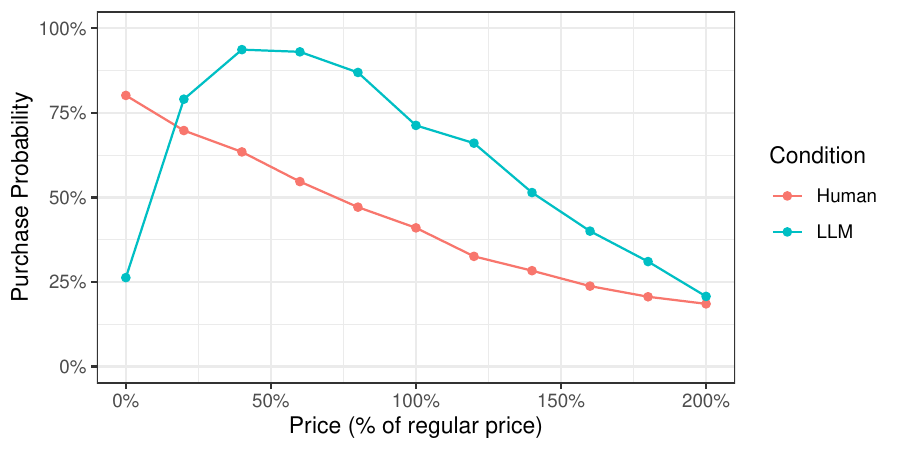}
    \caption{Demand curve elicited from humans vs. LLM, averaged over 40 products}\label{fig:vanilla_comparison}
\end{figure}

These findings are not surprising in hindsight: in real observational data, the price of the focal product should be informative of other factors such as the price of the competing products, especially if we do not know anything about the context, the type of store, and how the price for a product is set. An LLM trained on such data would also naturally make such association. This association can be realistic and useful if researchers intend to use LLMs to elicit the correlation between these other factors and the focal product price. However, this same association is problematic when researchers intend to estimate the causal impact of changing the focal price, while holding everything else constant, because it leads to endogeneity. One common approach to addressing such endogeneity is to control for additional covariates. The next section explores the benefits and novel challenges associated with this approach.

\section{Potential Solution: Controlling for Covariates}\label{sec:control}

When working with real-world data, a common approach for causal identification is to control for covariates. If researchers have controlled for all confounding covariates that affect both the treatment and outcome, the conditional independence assumption, or \textit{ignorability}, is satisfied. If sufficient variation remains after controlling for covariates, then researchers can identify the causal effect of interest. In LLM-simulated experiments, a natural starting point to control for covariates is adding detailed information about individuals and their environment in the prompt. These covariates can range from demographic information, location, preferences, budget, etc.

\cite{argyle_out_2023} introduced an approach for controlling for such covariates. They advocate for running studies on ``silicon samples" of digital personas, which in their case were described by demographic variables and political beliefs that were drawn to align with nationally representative samples. This approach has been picked up by the industry and commercialized by several firms (e.g., \href{https://www.syntheticusers.com/}{Synthetic Users}, \href{https://www.evidenza.ai/}{Evidenza}). Figure \ref{fig:demographic} shows that on average, controlling for covariates, such as demographic variables does indeed improve the results in our case compared to not specifying any detail. However, it does not fully resolve the issue. Web Appendix \ref{appendix:persona} describes the detailed prompts and covariates used, which involve creating of 500 personas for each \{product,category\} pair. 

\begin{figure}[htbp!]
    \centering
    \includegraphics[scale = 0.8]{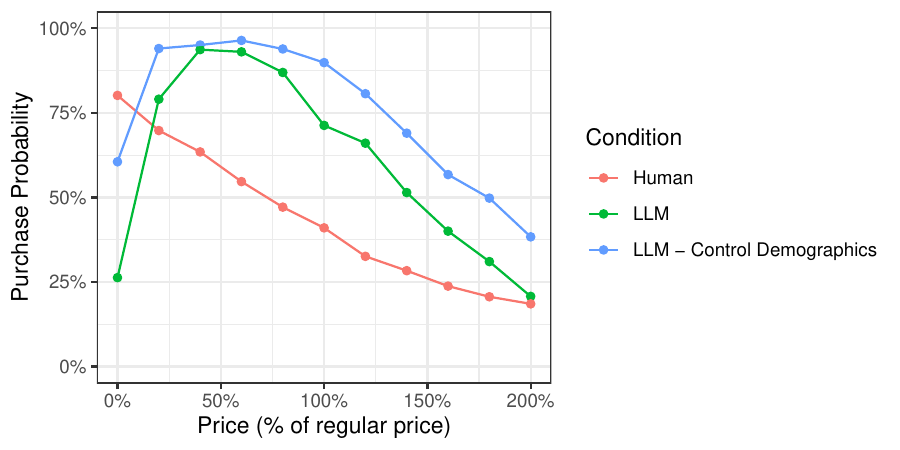}
    \caption{Demand curves elicited by LLM after controlling for demographic variables}\label{fig:demographic}
\end{figure}

A natural question is what constitutes a principled approach to controlling for covariates in LLM simulations and what potential pitfalls exist. When working with human observational data, researchers commonly employ the “kitchen-sink” approach, controlling for many observed pre-treatment variables because a rich set of covariates makes the conditional-independence assumption more credible by capturing confounders. This benefit is strongest with many pre-treatment variables and a large dataset that yields better empirical overlap. While LLM simulations seem well suited here—one can specify many pre-treatment covariates and simulate large samples—an important consideration is the stage at which covariates are controlled for. When working with human data, empirical researchers often control for covariates such as product characteristics and rich user history to address endogeneity during \emph{data analysis}, after the data have already been collected. Hence, analyzing the data does not change the original data-generating process that has already taken place. In contrast, adding covariates via the prompt in an LLM simulation occurs at the \emph{data-generation} stage. Consequently, the additional covariates can change the factors considered by a simulated individual when making decisions.

For example, in the context of demand estimation, one can include contextual information in the prompt such as the price of competing products in the category, which can be used as a proxy for the type of stores that the customer is visiting and the competitive environment that the focal product is facing. However, while adding contextual variables such as competing prices in the LLM simulation may prevent the LLM from using the price treatment to infer these contextual variables, adding them may also lead the LLM-simulated customer to put a heavier emphasis on these factors when making decisions. Figure \ref{fig:fixed_competing_last_price} shows (in the specific example of Coca-Cola) that the demand curve (elicited by simulating the choices of 500 personas) is no longer smooth after fixing such contextual variables,\footnote{We fix the competing price to be the same as the regular price of the focal product.} and the simulated customers' purchase decisions follow a step function: customers will purchase if and only if the price is lower than or equal to the competing price. Web Appendix \ref{appendix:competing_price_focalism} describes the detailed prompts.\footnote{One possible way to fix the non-smoothness of the demand curve is to simulate a distribution of contextual variables, each with difference past and competing prices. However, this is no longer useful for a particular context, e.g., a particular store, because the simulated context deviates from the targeted context. 
In a real-world scenario, we should expect that even in the absence of heterogeneity in context (e.g., in a given store in a given day), there should still exists heterogeneity in consumers, such that the demand curve would be smoother.} Such scenario is implausible in the real world because consumers are often inattentive and also care about other dimensions of the product characteristics. However, in LLM simulation, controlling for competing price to address endogeneity may unintentionally make the controlled covariates artificially salient to the simulated consumer and change the decision environment.

\begin{figure}
    \centering
    \includegraphics[width=0.5\linewidth]{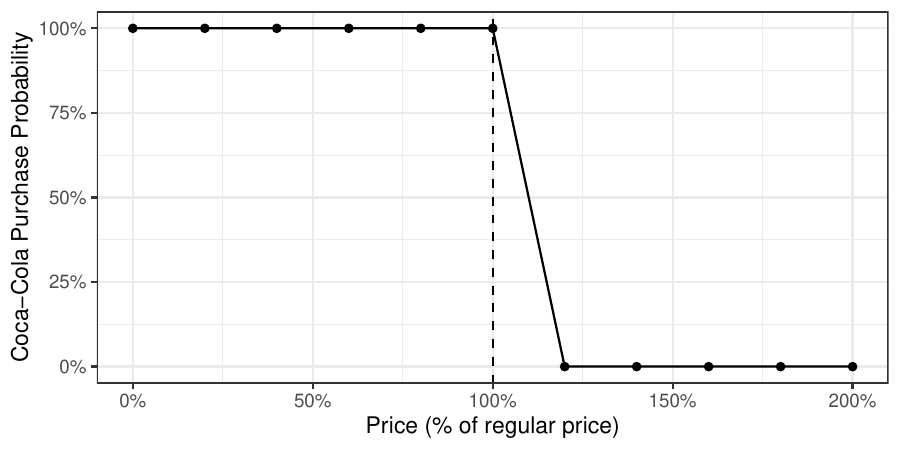}
    \caption{Purchase probability, controlling for demographics and competing price: Coca-Cola example.}
    \label{fig:fixed_competing_last_price}
\end{figure}

Such phenomenon in which decision makers place more emphasis on certain aspects of a choice set or scenario than they would in real-world decisions is known in the consumer behavior literature as focalism \citep{schkade1998does,wilson2000focalism}, and constitutes a threat to ecological validity. For example, in the context of conjoint analysis, \cite{bedi2021damaged} argue that when minor attributes are presented together with major ones, they tend to receive undue attention from respondents (due to focalism), leading to upward biased estimates of the importance of these minor features. While it has been documented that LLMs are prone to focalism \citep{cheng2023compost} and focalism is not an issue that is unique to LLM experiments, the nature of the risk differs as it introduces a new set of trade-off: introducing additional covariates in the simulation may improve unconfoundedness but sacrifice ecological validity. Therefore, unlike standard experimental analysis where controlling covariates typically improve the performance, in LLM simulation, adding more covariates may degrade the performance, because it render the simulated human behavior less realistic. 

To illustrate this trade-off empirically, we leverage a publicly available dataset from \cite{toubia2025twin} collected through multiple survey waves containing diverse measures of consumer cognitive and psychological traits alongside demographic information. Importantly, \cite{toubia2025twin} also replicated our pricing study on their panel, allowing us to relate the results of a study with the exact same design as ours to other variables collected on the same sample of consumers. We implement a sequential covariate addition procedure across
   twelve stages, beginning with a baseline of 14 demographic variables (age, education, income, and others), then systematically adding behavioral
  economics and psychological measures one at a time. This granular approach introduces individual covariates sequentially: tightwad-spendthrift score 
   (stage 2), discount rate and present bias (stage 3), risk and loss aversion (stages 4-5), financial literacy and numeracy (stages 6-7), mental
  accounting (stage 8), decision-making styles (stages 9-10), environmental attitudes (stage 11), and the Big Five personality traits (stage 12), expanding from 14 to 30 total control variables.

Figure \ref{fig:mae_progression} tracks the mean absolute error (MAE) between actual consumer choices in \cite{toubia2025twin}'s replication of our study and LLM predictions as we gradually add covariates. Appendix \ref{appendix:sequential_covariate_addition} provides detailed results showing the non-monotonic effects at each stage, Table \ref{tab:variable_progression} reports the additional covariates added at each stage and Prompt \ref{prompt:full_persona_part1} documents the prompt templates used. In standard experimental analyses with human data, controlling for additional covariates typically improves model performance, provided the sample size is sufficient. However, our results reveal a non-monotonic pattern: initial covariate additions improve accuracy, but beyond a threshold, further additions degrade performance. This deterioration may occurs because explicitly specifying covariates in the prompt induces focalism, in which the LLM assigns disproportionate weight to variables that consumers would typically ignore in naturalistic decision contexts, thereby compromising ecological validity despite reducing confounding.

\begin{figure}[htbp!]
    \centering
    \includegraphics[width=.8\linewidth]{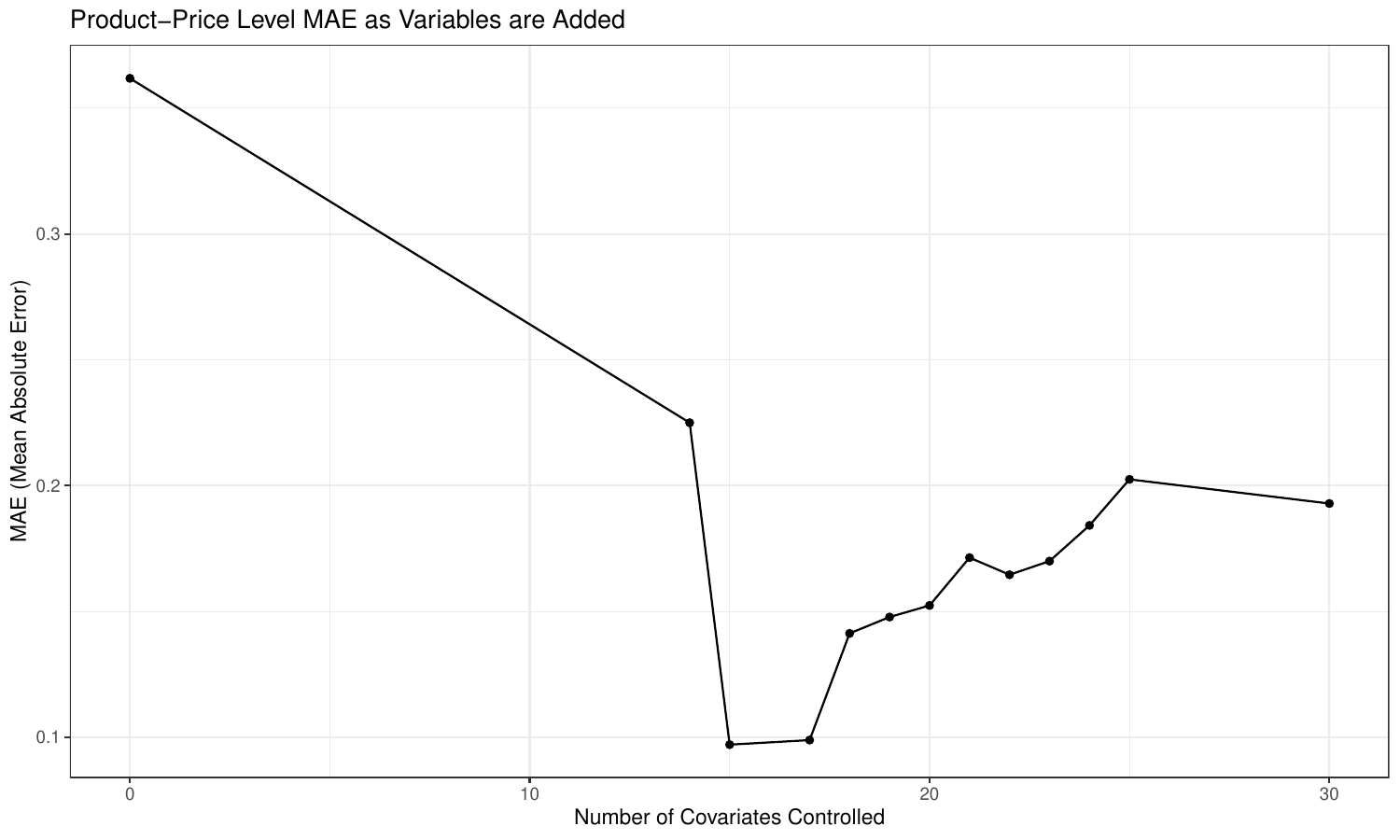}
\caption{Mean absolute error progression as more covariates are controlled in the simulation. The MAE is calculated by comparing the purchase probability for each (product, price) combination.}    \label{fig:mae_progression}
\end{figure}

\newpage 

\section{Core Issue: Ambiguous Prompting Strategy}\label{sec:theory}

So far we have empirically explored one context (demand estimation) with 40 products and one specific version of an LLM. A natural question is whether the issue we have uncovered applies generally to all LLMs and other experiment contexts. This section formalizes the key issue and shows that one key step to harness the full potential of LLM in simulating human behavior lies in avoiding ambiguous prompting strategies. Formally, 

\begin{definition}[Prompting Strategy]
A \emph{prompting strategy} is a function:
\[
s: \mathcal{Q} \rightarrow \mathcal{P},
\]
that maps each question \( q \in \mathcal{Q} \) to a specific prompt \( p \in \mathcal{P} \).
\end{definition}

\begin{definition}[Ambiguous Prompting Strategy]
An \emph{ambiguous prompting strategy} occurs when there exists distinct questions $q_1$ and $q_2$ that have different correct answers, but are mapped to the same prompt $p$: $s(q_1) = s(q_2) = p$ for some prompt \( p \in \mathcal{P} \).
\end{definition}
Such ambiguous prompting strategies should be avoided because - for \textbf{any} given LLM model, no matter how advanced they are, if researchers use an ambiguous prompting strategy, then if it happens to correctly answer one question correctly, $q_1$, it will incorrectly answer its ambiguous counterpart, $q_2$. Web Appendix \ref{appendix:theory} provides a more detailed proof.

An important step for practitioners to avoid ambiguous prompting strategy is to examine whether a specific prompt being used is open to multiple interpretations. For example, consider a simple prompt $p$ under a blinded simulation design:

    \begin{prompt}[A simple ambiguous prompt]\label{prompt:simple}
                Consider the product category: \{category\}.
                
                Suppose you are in a grocery store, and you see the following product in that category: \{product\}. 
                
                The product is priced at: \textcolor{red}{\{price\}}. You decide to \_\_\_ ['purchase' or 'not purchase']
    \end{prompt}

This prompt is \emph{ambiguous} because it allows for at least two interpretations:

\begin{itemize}
    \item $q_1$ (interventional): what is the customer's purchase decision when the price is \textbf{set to} be $\{price\}$?
    \item $q_2$ (observational): what is the customer's purchase decision when the price \textbf{happens to be} $\{price\}$?
\end{itemize}

 These two questions yield different answers. Using the \emph{do-operator} \citep{Pearl2009} to distinguish active intervention from observation, and denoting $Y$ as Purchase and $D$ as Price, the answer to $q_1$ is $a_1: P(Y|do(D))$, where $do(D)$ is randomized and independent of unobserved factors $U$ ($D \perp U$). The answer to $q_2$ is $a_2: P(Y|D)$, where $D$ can be correlated with other confounders $U$ that affect $Y$ ($D \not\perp U$), as in observational data. Since LLMs are trained primarily on observational data, it is plausible that an ambiguous question will also be interpreted in a way that is similar to observational data, hence answering the wrong question. Appendix \ref{appendix:dgp_illustration} provides a DAG illustration of the different types of data generating process that an LLM may mimic when an ambiguous prompt is used. 

As discussed in Section \ref{sec:control}, one might attempt to address this issue by adding more variables to the prompt. However, adding variables to the prompt can actually increase rather than decrease ambiguity from a causal perspective. To illustrate this multiplication of possible interpretations in practice, consider the following detailed example:
\begin{prompt}[More Detailed Scenario with Ambiguous Treatment]\label{prompt:detailed}
\user{You are a customer who has recently read an article about how sugar consumption affects health. You are now in a store and you see Regular Coke (39g sugar per 12 oz, 12 pack cans) priced at \$4.99, with a soda tax of \$0.12/oz. Would you or would you not purchase the product?}
\end{prompt}

This prompt, while detailed, may be even more ambiguous. It permits more valid causal interpretations: 
\begin{enumerate}
    \item \textbf{Price as Treatment:} $P(Y|do(\text{price}), \text{health\_info}, \text{sugar\_content})$ \\
    ``What happens when we experimentally set price to \$4.99, holding fixed that customers are health-aware?"
    
    \item \textbf{Health Information as Treatment:} $P(Y|\text{price}, do(\text{health\_info}), \text{sugar\_content})$:
    ``What happens when we experimentally expose customers to health articles, holding fixed price at \$4.99?"
    
    \item \textbf{Sugar Content as Treatment:} $P(Y|\text{price}, \text{health\_info}, do(\text{sugar\_content}))$:
    ``What happens when we experimentally highlight sugar content, holding fixed price and prior health awareness?"
\end{enumerate}

\section{The value of unblinding}\label{sec:unblinding}

Our theoretical framework points toward a promising direction: designing unambiguous prompts by explicitly communicating the experimental design, which by definition entails unblinding. The causal inference literature \citep{AngristPischke2009, ImbensRubin2015, athey2017econometrics} has long emphasized the importance of clearly specifying estimands and sources of variation. Following this principle, we argue that unblinded prompts should specify both the treatment being randomized and the distribution from which the treatment is drawn. We test unblinding through two approaches: first by examining out-of-the-box LLMs to establish baseline differences, and then by generalizing to fine-tuning experiments.

We begin by examining whether unblinding improves performance in standard, pre-trained language models, establishing a baseline for how prompt design affects predictions without task-specific training. To assess the robustness of the effects, we evaluate a series of GPT models spanning multiple generations, from GPT-3.5-Turbo to the latest models available at the time of writing (GPT-4.1). We operationalize the distinction between blinded and unblinded approaches through two different system prompts.
\begin{prompt}[Blinded System Prompt]\label{prompt:blinded_template}
\textbf{System:} You, AI, are a customer. Your task is to fill in the blank \_\_\_. Return the completed information without extra text.
\end{prompt}

The unblinded condition alters the system prompt to include experimental context:
\begin{prompt}[Unblinded System Prompt]\label{prompt:unblinded_template}
\textbf{System:} You, AI, are an expert in predicting customer behavior. The customer is given a survey on their purchase decision for the \{product\} in \{category\} where the price of the product is randomly and uniformly drawn from \{min\_price\} to \{max\_price\}. The customer is only presented with one price and is blind to this randomization design. The customer is given the following survey. Your task is to fill in the blank \_\_\_. Return the completed information without extra text.
\end{prompt}

This design choice operationalizes our theoretical framework. The unblinded prompt explicitly communicates the data-generating process, including both the randomization mechanism and the support of the price distribution. Figure \ref{fig:model_evolution} reveals a consistent pattern across model generations. Unblinded prompts systematically generate more accurate predictions, with performance improvements ranging from 1\% to over 60\% relative to baseline. The mean absolute error (MAE) reduction persists across all models tested, including reasoning models, and this performance advantage remains substantial as models advance.

\begin{figure}[htbp!]
\centering
\includegraphics[width=\textwidth]{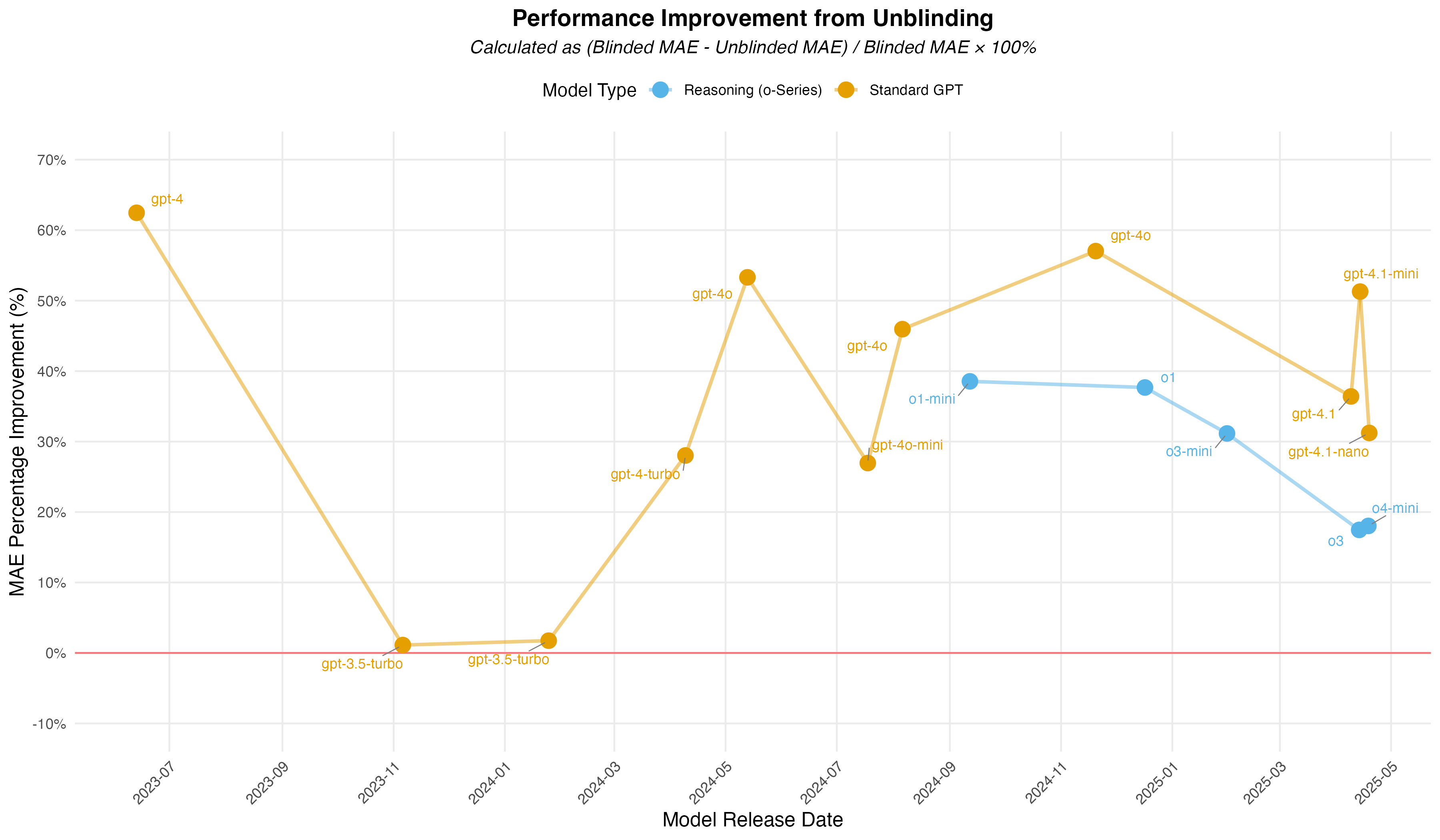}
\caption{Gain from unblinding for various models (ordered by release date and reasoning ability). The figure shows MAE performance improvement when moving from blinded to unblinded prompts, with GPT models ordered by release date. The MAE is calculated by comparing the purchase probability for each (product, price) combination, and then the improvement is normalized by the baseline MAE of the blinded prompts. Positive values indicate superior performance for unblinded prompts.}
\label{fig:model_evolution}
\end{figure}

While these out-of-the-box results demonstrate the value of unblinding, they raise a natural question: will this technique become obsolete as LLMs improve, particularly through fine-tuning? We argue that unblinding is not merely a strategy for evaluating and using pre-trained models but an important, complementary step in the model development and fine-tuning process itself.

To illustrate, we consider fine-tuning a model on actual human data. When performing fine-tuning, researchers must decide what context to provide. For instance, should they reveal the experimental origins of the data (an unblinded approach, as in Prompt \ref{prompt:unblinded_template}) or choose to not disclose such context (a blinded approach, as in Prompt \ref{prompt:blinded_template})? This choice is consequential. Even if fine-tuning helps the model learn the 
correlation between price and purchase, without the context of randomization, it has an insufficient basis for 
interpreting this correlation as causal. Explicitly stating that the price was randomized provides context; hence, it may help the model interpret the learned relationship as causal and thereby improving its potential to generalize. Appendix \ref{sec:fine-tuning} provides a more detailed discussion.

To empirically demonstrate this complementarity, we fine-tune models using our demand estimation survey data. We design an out-of-distribution test by performing a leave-one-category-out cross-validation. We divide the 40 products into four category groups: beverages (including juices, soft drinks, beer, and dairy drinks), refrigerated goods (including meats, dairy products, and frozen foods), snacks and bakery (including chips, cookies, coffee, candy, and ice cream), and household and pet supplies (including cleaning products, paper goods, pet food, and medications). In each fold, we fine-tune a model on three category groups and validate on the held-out fourth category group. This approach tests for true generalization, as the model must learn principles that transfer across fundamentally different product types rather than simply interpolating between similar items. This mirrors real-world applications where practitioners may be interested in forecasting scenarios that historical data has not observed before. 

For each of the four cross-validation folds, we fine-tune GPT-4o-mini models under a 2 (blinded vs. unblinded prompt in fine-tuning stage) × 2 (blinded vs. unblinded prompt in evaluation stage) factorial design, varying the prompt type (blinded Prompt \ref{prompt:blinded_template}  vs. unblinded Prompt \ref{prompt:unblinded_template}) used during both the fine-tuning and evaluation stages. This design allows us to disentangle the effects of unblinding at each stage. To evaluate performance, we test each model on its held-out category, generating 50 responses per product-price pair (with temperature set to 1) and calculating the purchase probability. We then measure the MAE against the true probabilities from our human survey.

Table \ref{tab:mae_survey_only} shows that fine-tuning consistently improves performance over the out-of-the-box model. However, the most accurate results are achieved when the model is both fine-tuned and evaluated with unblinded prompts. This combination reduces the MAE by an additional 15\% compared to the best-performing blinded model, from 0.134 to 0.113. This finding highlights that unblinding offers a significant, nearly costless performance improvement, complementary to the more resource-intensive process of data acquisition and fine-tuning.

\begin{table}[htbp]
\centering
\caption{MAE of Fine-Tuned Models on Survey Data}
\begin{tabular}{lcc}
\toprule
Model & \multicolumn{2}{c}{Evaluation Method} \\
\cmidrule(lr){2-3}
 & Blinded & Unblinded \\
\midrule
Out-of-box GPT-4o-mini & 0.532 & 0.397 \\
Fine-tuned on Survey (Blinded) & 0.134 & 0.128 \\
Fine-tuned on Survey (Unblinded) & 0.130 & \textcolor{red}{\textbf{0.113}} \\
\bottomrule
\end{tabular}

\label{tab:mae_survey_only}
\end{table}

The value of unblinding becomes even more pronounced when fine-tuning models on complex or mixed-quality datasets. While the previous exercise demonstrated unblinding's value with a single-source high-quality dataset, practical applications often involve greater uncertainty. Developers may know a model's general purpose—such as forecasting demand under different pricing strategies—without knowing its \textbf{exact} future applications. Even in a simple setting where a firm only sells two products A and B, the application can include 1) forecasting demand when A and B are priced as usual 2) adjusting the price for product A when B is priced as usual 3) adjusting the price for product B when A is priced as usual, 4) adjusting price for both product A and B, 5) predicting the introduction of a new product C when product A and B are priced as usual. To address this range of potential queries, models may be fine-tuned on diverse training data comprising both experimental datasets, each with specific treatments and covariates, and observational datasets that capture business-as-usual patterns and correlations. While observational data can answer routine questions, it may prove problematic for certain counterfactual analyses. Without additional context during fine-tuning, experimental and observational data risk becoming indistinguishable to the model. Moreover, ambiguous prompts leave the model uncertain about which dataset to leverage for specific questions. Unblinding thus enhances model robustness against irrelevant datasets that may not directly address researchers' intended questions. 

To illustrate the point, we replicate the aforementioned fine-tuning exercise, with the exception that we also added an irrelevant observational dataset that has documented how customers browse online on Amazon \citep{berke2024open}. The dataset is by construction observational and biased because it has only recorded purchases and no recorded non-purchases. Appendix \ref{appendix:amazon_fine_tune} documents the exact prompt we used to incorporate the additional Amazon data in the fine-tuning process. 

Table \ref{tab:mae_survey_amazon} shows that when this irrelevant dataset is included, the performance of models trained with blinded prompts degrades significantly. In contrast, models trained with unblinded prompts prove far more robust. By providing clear context about the data's origin, unblinding enables the model to correctly weight the experimental data and discount the noisy observational data, preserving its predictive accuracy.

\begin{table}[htbp]
\centering
\caption{MAE of Fine-Tuned Models on Mixed (Survey + Amazon) Data}
\begin{tabular}{lcc}
\toprule
Model & \multicolumn{2}{c}{Evaluation Method} \\
\cmidrule(lr){2-3}
 & Blinded & Unblinded \\
\midrule
Out-of-box GPT-4o-mini & 0.532 & 0.397 \\
Fine-tuned on Survey+Amazon (Blinded) & 0.233 & 0.126 \\
Fine-tuned on Survey+Amazon (Unblinded) & 0.145 & \textcolor{red}{\textbf{0.120}} \\
\bottomrule
\end{tabular}

\label{tab:mae_survey_amazon}
\end{table}

\section{Conclusion}\label{sec:conclusion}

This paper identifies a fundamental challenge in using LLMs to simulate experiments with human participants. Unlike human participants who exist independently of experiments, LLM-simulated individuals and environments are generated based on prompts. While LLMs' ability to capture associations from training data is generally beneficial, it creates complications in experimental settings where changing one variable can unintentionally affect other unspecified variables meant to remain constant. This confounding makes it difficult to accurately simulate counterfactual human behavior. 

Our exploration of potential solutions to this challenge reveals important insights for improving LLM simulations. While adding detailed context to control for confounders can help address the confounding issue, our analysis demonstrates that this approach can introduce focalism—causing simulated individuals to overweight explicitly mentioned variables relative to realistic human behavior. This trade-off between reducing confounding and maintaining ecological validity represents a fundamental consideration when designing LLM simulations with detailed prompts.

These empirical observations point to a deeper theoretical issue. Making the LLM blind to the experimental design leads to ambiguous prompts. Because these challenges stem from ambiguous prompting strategies rather than model limitations, our theoretical framework suggests that developing unambiguous prompting strategies constitutes a fundamental principle that complements other model improvement techniques.

Building on this theoretical understanding, our empirical findings demonstrate that unblinded experimental designs offer a particularly promising path forward. By explicitly communicating the randomization mechanism and experimental context, unblinded prompts reduce ambiguity and enable more accurate simulations. This approach shows consistent improvements across model generations and proves especially valuable when combined with fine-tuning, where it enhances the model's ability to distinguish causal from correlational patterns in training data. The complementary nature of unblinding and fine-tuning suggests that developing LLMs capable of accurately simulating counterfactual human behavior when provided with clear, unambiguous prompts represents both a crucial and achievable direction for advancing the field.

Our findings have important practical implications for different stakeholders in the LLM ecosystem. For researchers evaluating LLM performance, our results caution against using blinded designs when assessing simulation capabilities. Poor performance in such settings may reflect prompt ambiguity rather than model limitations, making it impossible to properly diagnose model capabilities. For academic researchers studying LLMs, our work highlights the need to establish consensus on unambiguous prompting strategies. Without such consensus, different research teams may map distinct questions to identical prompts, leading to conflicting interpretations that no model can simultaneously satisfy. Perhaps most importantly, our findings suggest avoiding ambiguous prompts during the development and fine-tuning of LLMs. When prompts permit multiple interpretations, improving performance on one interpretation may degrade performance on others - an unnecessary trade-off that could be avoided through clearer prompting strategies, especially given uncertainty about future applications.

Our paper highlights a broader insight regarding the use of LLMs: rather than simply applying human subject protocols to LLMs, researchers must develop new experimental practices tailored to LLMs' unique properties. The field of LLM simulations remains nascent, and best practices have yet to be established. Our work highlights the need for a new set of principles for designing LLM simulations and evaluation frameworks. We hope this will guide future research in developing more reliable methods for leveraging LLMs in experimental research while remaining mindful of their fundamental differences from human subjects.

\

\textbf{Funding and Competing Interests Declarations}

All authors certify that they have no affiliations with or involvement in any organization or entity with any financial interest or non-financial interest in the subject matter or materials discussed in this manuscript. The authors have no funding to report.

\bibliographystyle{apalike}

\bibliography{ChatGPT, additional_references}

@article{dillion_can_2023,
	title = {Can {AI} language models replace human participants?},
	volume = {0},
	issn = {1364-6613, 1879-307X},
	url = {https://www.cell.com/trends/cognitive-sciences/abstract/S1364-6613(23)00098-0},
	doi = {10.1016/j.tics.2023.04.008},
	language = {English},
	number = {0},
	urldate = {2023-05-30},
	journal = {Trends in Cognitive Sciences},
	author = {Dillion, Danica and Tandon, Niket and Gu, Yuling and Gray, Kurt},
	month = may,
	year = {2023},
	pmid = {37173156},
	note = {Publisher: Elsevier},
	keywords = {artificial intelligence, judgments, language models, morality, participants, research methods},
}

@article{argyle_out_2023,
	title = {Out of {One}, {Many}: {Using} {Language} {Models} to {Simulate} {Human} {Samples}},
	issn = {1047-1987, 1476-4989},
	shorttitle = {Out of {One}, {Many}},
	url = {https://www.cambridge.org/core/journals/political-analysis/article/abs/out-of-one-many-using-language-models-to-simulate-human-samples/035D7C8A55B237942FB6DBAD7CAA4E49},
	doi = {10.1017/pan.2023.2},
	abstract = {We propose and explore the possibility that language models can be studied as effective proxies for specific human subpopulations in social science research. Practical and research applications of artificial intelligence tools have sometimes been limited by problematic biases (such as racism or sexism), which are often treated as uniform properties of the models. We show that the “algorithmic bias” within one such tool—the GPT-3 language model—is instead both fine-grained and demographically correlated, meaning that proper conditioning will cause it to accurately emulate response distributions from a wide variety of human subgroups. We term this property algorithmic fidelity and explore its extent in GPT-3. We create “silicon samples” by conditioning the model on thousands of sociodemographic backstories from real human participants in multiple large surveys conducted in the United States. We then compare the silicon and human samples to demonstrate that the information contained in GPT-3 goes far beyond surface similarity. It is nuanced, multifaceted, and reflects the complex interplay between ideas, attitudes, and sociocultural context that characterize human attitudes. We suggest that language models with sufficient algorithmic fidelity thus constitute a novel and powerful tool to advance understanding of humans and society across a variety of disciplines.},
	language = {en},
	urldate = {2023-05-30},
	journal = {Political Analysis},
	author = {Argyle, Lisa P. and Busby, Ethan C. and Fulda, Nancy and Gubler, Joshua R. and Rytting, Christopher and Wingate, David},
	month = feb,
	year = {2023},
	note = {Publisher: Cambridge University Press},
	keywords = {machine learning, artificial intelligence, computational social science, public opinion},
	pages = {1--15},
	file = {Argyle et al_2023_Out of One, Many.pdf:/Users/zg2467/Zotero/storage/6EGLZTLC/Argyle et al_2023_Out of One, Many.pdf:application/pdf},
}

@misc{park_generative_2023,
	title = {Generative {Agents}: {Interactive} {Simulacra} of {Human} {Behavior}},
	shorttitle = {Generative {Agents}},
	url = {http://arxiv.org/abs/2304.03442},
	doi = {10.48550/arXiv.2304.03442},
	abstract = {Believable proxies of human behavior can empower interactive applications ranging from immersive environments to rehearsal spaces for interpersonal communication to prototyping tools. In this paper, we introduce generative agents--computational software agents that simulate believable human behavior. Generative agents wake up, cook breakfast, and head to work; artists paint, while authors write; they form opinions, notice each other, and initiate conversations; they remember and reflect on days past as they plan the next day. To enable generative agents, we describe an architecture that extends a large language model to store a complete record of the agent's experiences using natural language, synthesize those memories over time into higher-level reflections, and retrieve them dynamically to plan behavior. We instantiate generative agents to populate an interactive sandbox environment inspired by The Sims, where end users can interact with a small town of twenty five agents using natural language. In an evaluation, these generative agents produce believable individual and emergent social behaviors: for example, starting with only a single user-specified notion that one agent wants to throw a Valentine's Day party, the agents autonomously spread invitations to the party over the next two days, make new acquaintances, ask each other out on dates to the party, and coordinate to show up for the party together at the right time. We demonstrate through ablation that the components of our agent architecture--observation, planning, and reflection--each contribute critically to the believability of agent behavior. By fusing large language models with computational, interactive agents, this work introduces architectural and interaction patterns for enabling believable simulations of human behavior.},
	urldate = {2023-05-22},
	publisher = {arXiv},
	author = {Park, Joon Sung and O'Brien, Joseph C. and Cai, Carrie J. and Morris, Meredith Ringel and Liang, Percy and Bernstein, Michael S.},
	month = apr,
	year = {2023},
	note = {arXiv:2304.03442 [cs]},
	keywords = {Computer Science - Artificial Intelligence, Computer Science - Machine Learning, Computer Science - Human-Computer Interaction},
	file = {arXiv.org Snapshot:/Users/zg2467/Zotero/storage/UHTDPXVU/2304.html:text/html;Park et al_2023_Generative Agents.pdf:/Users/zg2467/Zotero/storage/V7YAJLTD/Park et al_2023_Generative Agents.pdf:application/pdf},
}

@misc{brand_using_2023,
	address = {Rochester, NY},
	type = {{SSRN} {Scholarly} {Paper}},
	title = {Using {GPT} for {Market} {Research}},
	url = {https://papers.ssrn.com/abstract=4395751},
	abstract = {Large language models (LLMs) have quickly become popular as labor-augmenting tools for programming, writing, and many other processes that benefit from quick text generation. In this paper we explore the uses and benefits of LLMs for marketing researchers and practitioners. In contrast to prior work, we focus on the distributional nature of LLM responses. We offer two sets of results. First, we show that the Generative Pre-trained Transformer 3 (GPT-3) model, a widely-used LLM, responds to sets of survey questions in ways that are consistent with economic theory and well-documented patterns of consumer behavior, including downward-sloping demand curves and state dependence. Second, we show that estimates of willingness-to-pay for products and features generated by GPT-3 are of reasonable magnitudes and match estimates from a recent study that elicited preferences from human consumers. We also offer preliminary guidelines for how best to query information from GPT-3 for marketing purposes and discuss potential limitations.},
	language = {en},
	urldate = {2023-03-23},
	author = {Brand, James and Israeli, Ayelet and Ngwe, Donald},
	month = mar,
	year = {2023},
	keywords = {Consumer preferences, AI, Conjoint, Generative Pre-trained Transformer (GPT), Large language models (LLMs), Market research},
	file = {Brand et al_2023_Using GPT for Market Research.pdf:/Users/zg2467/Zotero/storage/II6HV7SD/Brand et al_2023_Using GPT for Market Research.pdf:application/pdf},
}

@misc{horton_large_2023,
	title = {Large {Language} {Models} as {Simulated} {Economic} {Agents}: {What} {Can} {We} {Learn} from {Homo} {Silicus}?},
	shorttitle = {Large {Language} {Models} as {Simulated} {Economic} {Agents}},
	url = {http://arxiv.org/abs/2301.07543},
	doi = {10.48550/arXiv.2301.07543},
	abstract = {Newly-developed large language models (LLM) -- because of how they are trained and designed -- are implicit computational models of humans -- a homo silicus. These models can be used the same way economists use homo economicus: they can be given endowments, information, preferences, and so on and then their behavior can be explored in scenarios via simulation. I demonstrate this approach using OpenAI's GPT3 with experiments derived from Charness and Rabin (2002), Kahneman, Knetsch and Thaler (1986) and Samuelson and Zeckhauser (1988). The findings are qualitatively similar to the original results, but it is also trivially easy to try variations that offer fresh insights. Departing from the traditional laboratory paradigm, I also create a hiring scenario where an employer faces applicants that differ in experience and wage ask and then analyze how a minimum wage affects realized wages and the extent of labor-labor substitution.},
	urldate = {2023-03-22},
	publisher = {arXiv},
	author = {Horton, John J.},
	month = jan,
	year = {2023},
	note = {arXiv:2301.07543 [econ, q-fin]},
	keywords = {Economics - General Economics},
	file = {arXiv.org Snapshot:/Users/zg2467/Zotero/storage/QDYNA9VR/2301.html:text/html;Horton_2023_Large Language Models as Simulated Economic Agents.pdf:/Users/zg2467/Zotero/storage/PSLARX3A/Horton_2023_Large Language Models as Simulated Economic Agents.pdf:application/pdf},
}

@inproceedings{aher_using_2023,
	title = {Using {Large} {Language} {Models} to {Simulate} {Multiple} {Humans} and {Replicate} {Human} {Subject} {Studies}},
	url = {https://proceedings.mlr.press/v202/aher23a.html},
	abstract = {We introduce a new type of test, called a Turing Experiment (TE), for evaluating to what extent a given language model, such as GPT models, can simulate different aspects of human behavior. A TE can also reveal consistent distortions in a language model’s simulation of a specific human behavior. Unlike the Turing Test, which involves simulating a single arbitrary individual, a TE requires simulating a representative sample of participants in human subject research. We carry out TEs that attempt to replicate well-established findings from prior studies. We design a methodology for simulating TEs and illustrate its use to compare how well different language models are able to reproduce classic economic, psycholinguistic, and social psychology experiments: Ultimatum Game, Garden Path Sentences, Milgram Shock Experiment, and Wisdom of Crowds. In the first three TEs, the existing findings were replicated using recent models, while the last TE reveals a “hyper-accuracy distortion” present in some language models (including ChatGPT and GPT-4), which could affect downstream applications in education and the arts.},
	language = {en},
	urldate = {2023-10-16},
	booktitle = {Proceedings of the 40th {International} {Conference} on {Machine} {Learning}},
	publisher = {PMLR},
	author = {Aher, Gati V. and Arriaga, Rosa I. and Kalai, Adam Tauman},
	month = jul,
	year = {2023},
	note = {ISSN: 2640-3498},
	pages = {337--371},
	file = {Aher et al_2023_Using Large Language Models to Simulate Multiple Humans and Replicate Human.pdf:/Users/zg2467/Zotero/storage/WFLJJFH3/Aher et al_2023_Using Large Language Models to Simulate Multiple Humans and Replicate Human.pdf:application/pdf},
}

@article{dellavigna_uniform_2019,
	title = {Uniform {Pricing} in {U}.{S}. {Retail} {Chains}*},
	volume = {134},
	issn = {0033-5533},
	url = {https://doi.org/10.1093/qje/qjz019},
	doi = {10.1093/qje/qjz019},
	abstract = {We show that most U.S. food, drugstore, and mass-merchandise chains charge nearly uniform prices across stores, despite wide variation in consumer demographics and competition. Demand estimates reveal substantial within-chain variation in price elasticities and suggest that the median chain sacrifices \$\{{\textbackslash}\$\}\$16 million of annual profit relative to a benchmark of optimal prices. In contrast, differences in average prices between chains are broadly consistent with the optimal benchmark. We discuss a range of explanations for nearly uniform pricing, highlighting managerial inertia and brand image concerns as mechanisms frequently mentioned by industry participants. Relative to our optimal benchmark, uniform pricing may significantly increase the prices paid by poorer households relative to the rich, dampen the response of prices to local economic shocks, alter the analysis of mergers in antitrust, and shift the incidence of intranational trade costs.},
	number = {4},
	urldate = {2023-11-15},
	journal = {The Quarterly Journal of Economics},
	author = {DellaVigna, Stefano and Gentzkow, Matthew},
	month = nov,
	year = {2019},
	pages = {2011--2084},
	file = {Full Text PDF:/Users/zg2467/Zotero/storage/AXNJHQ8K/DellaVigna and Gentzkow - 2019 - Uniform Pricing in U.S. Retail Chains.pdf:application/pdf;Snapshot:/Users/zg2467/Zotero/storage/RJRKG7V4/5523148.html:text/html},
}

@article{qin2024aiturk,
  title={AITurk: Using ChatGPT for social science research},
  author={Qin, Xin and Huang, Mingpeng and Ding, Jie},
  journal={Available at SSRN 4922861},
  year={2024}
}

@article{hewitt2024predicting,
  title={Predicting results of social science experiments using large language models},
  author={Hewitt, Luke and Ashokkumar, Ashwini and Ghezae, Isaias and Willer, Robb},
  journal={Preprint},
  year={2024}
}

@article{toubia2025twin,
  title={Twin-2K-500: A dataset for building digital twins of over 2,000 people based on their answers to over 500 questions},
  author={Toubia, Olivier and Gui, George Z and Peng, Tianyi and Merlau, Daniel J and Li, Ang and Chen, Haozhe},
  journal={Marketing Science (articles in advance)},
  year={2025}
}

@article{berke2024open,
  title={Open e-commerce 1.0, five years of crowdsourced US Amazon purchase histories with user demographics},
  author={Berke, Alex and Calacci, Dan and Mahari, Robert and Yabe, Takahiro and Larson, Kent and Pentland, Sandy},
  journal={Scientific Data},
  volume={11},
  number={1},
  pages={491},
  year={2024},
  publisher={Nature Publishing Group UK London}
}

@article{binz2025foundation,
  title={A foundation model to predict and capture human cognition},
  author={Binz, Marcel and Akata, Elif and Bethge, Matthias and Br{\"a}ndle, Franziska and Callaway, Fred and Coda-Forno, Julian and Dayan, Peter and Demircan, Can and Eckstein, Maria K and {\'E}ltet{\H{o}}, No{\'e}mi and others},
  journal={Nature},
  pages={1--8},
  year={2025},
  publisher={Nature Publishing Group UK London}
}

@book{Pearl2009,
  author    = {Pearl, Judea},
  title     = {Causality: Models, Reasoning, and Inference},
  publisher = {Cambridge University Press},
  address   = {Cambridge, UK},
  edition   = {2},
  year      = {2009},
}

@article{arora2024express,
  title={Express: Ai-human hybrids for marketing research: Leveraging llms as collaborators},
  author={Arora, Neeraj and Chakraborty, Ishita and Nishimura, Yohei},
  journal={Journal of Marketing},
  pages={00222429241276529},
  year={2024},
  publisher={SAGE Publications Sage CA: Los Angeles, CA}
}

@article{goli2024frontiers,
  title={Frontiers: Can Large Language Models Capture Human Preferences?},
  author={Goli, Ali and Singh, Amandeep},
  journal={Marketing Science},
  year={2024},
  publisher={INFORMS}
}

@article{sarstedt2024using,
  title={Using large language models to generate silicon samples in consumer and marketing research: Challenges, opportunities, and guidelines},
  author={Sarstedt, Marko and Adler, Susanne J and Rau, Lea and Schmitt, Bernd},
  journal={Psychology \& Marketing},
  volume={41},
  number={6},
  pages={1254--1270},
  year={2024},
  publisher={Wiley Online Library}
}

@article{bail2024can,
  title={Can Generative AI improve social science?},
  author={Bail, Christopher A},
  journal={Proceedings of the National Academy of Sciences},
  volume={121},
  number={21},
  pages={e2314021121},
  year={2024},
  publisher={National Acad Sciences}
}

@article{hutson2023guinea,
  title={Guinea pigbots},
  author={Hutson, Matthew and Mastin, Ashley},
  journal={Science (New York, NY)},
  volume={381},
  number={6654},
  pages={121--123},
  year={2023}
}

@article{simmons2023large,
  title={Large language models as subpopulation representative models: A review},
  author={Simmons, Gabriel and Hare, Christopher},
  journal={arXiv preprint arXiv:2310.17888},
  year={2023}
}

@inproceedings{nozza2022pipelines,
  title={Pipelines for social bias testing of large language models},
  author={Nozza, Debora and Bianchi, Federcio and Hovy, Dirk and others},
  booktitle={Proceedings of BigScience Episode\# 5--Workshop on Challenges \& Perspectives in Creating Large Language Models},
  year={2022},
  organization={Association for Computational Linguistics}
}

@article{kurita2019measuring,
  title={Measuring bias in contextualized word representations},
  author={Kurita, Keita and Vyas, Nidhi and Pareek, Ayush and Black, Alan W and Tsvetkov, Yulia},
  journal={arXiv preprint arXiv:1906.07337},
  year={2019}
}

@inproceedings{zhang2020hurtful,
  title={Hurtful words: quantifying biases in clinical contextual word embeddings},
  author={Zhang, Haoran and Lu, Amy X and Abdalla, Mohamed and McDermott, Matthew and Ghassemi, Marzyeh},
  booktitle={proceedings of the ACM Conference on Health, Inference, and Learning},
  pages={110--120},
  year={2020}
}

@article{wilson2000focalism,
  title={Focalism: a source of durability bias in affective forecasting.},
  author={Wilson, Timothy D and Wheatley, Thalia and Meyers, Jonathan M and Gilbert, Daniel T and Axsom, Danny},
  journal={Journal of personality and social psychology},
  volume={78},
  number={5},
  pages={821},
  year={2000},
  publisher={American Psychological Association}
}

@article{schkade1998does,
  title={Does living in California make people happy? A focusing illusion in judgments of life satisfaction},
  author={Schkade, David A and Kahneman, Daniel},
  journal={Psychological science},
  volume={9},
  number={5},
  pages={340--346},
  year={1998},
  publisher={SAGE Publications Sage CA: Los Angeles, CA}
}

@article{bedi2021damaged,
  title={Damaged Damages: Errors in Patent and False Advertising Litigation},
  author={Bedi, Suneal and Reibstein, David},
  journal={Ala. L. Rev.},
  volume={73},
  pages={385},
  year={2021},
  publisher={HeinOnline}
}

@article{carlson2024use,
  title={The use of LLMs to annotate data in management research: Warnings, guidelines, and an application to organizational communication},
  author={Carlson, Natalie and Burbano, Vanessa},
  journal={Guidelines, and an Application to Organizational Communication (May 21, 2024)},
  year={2024}
}

@article{mohammadi2024wait,
  title={Wait, it’s all token noise? always has been: interpreting LLM behavior using Shapley value},
  author={Mohammadi, Behnam},
  journal={Available at SSRN 4759713},
  year={2024}
}

@article{gao2024take,
  title={Take Caution in Using LLMs as Human Surrogates: Scylla Ex Machina},
  author={Gao, Yuan and Lee, Dokyun and Burtch, Gordon and Fazelpour, Sina},
  journal={arXiv preprint arXiv:2410.19599},
  year={2024}
}

@article{ludwig2024large,
  title={Large language models: An applied econometric framework},
  author={Ludwig, Jens and Mullainathan, Sendhil and Rambachan, Ashesh},
  journal={arXiv preprint arXiv:2412.07031},
  year={2024}
}

@article{american2019publication,
  title={Publication Manual of the American Psychological Association, (2020)},
  author={American Psychological Association and others},
  journal={American Psychological Association},
  volume={428},
  year={2019}
}

@article{cheng2023compost,
  title={CoMPosT: Characterizing and evaluating caricature in LLM simulations},
  author={Cheng, Myra and Piccardi, Tiziano and Yang, Diyi},
  journal={arXiv preprint arXiv:2310.11501},
  year={2023}
}

@article{wang2024large,
  title={Large Language Models for Market Research: A Data-augmentation Approach},
  author={Wang, Mengxin and Zhang, Dennis J and Zhang, Heng},
  journal={arXiv preprint arXiv:2412.19363},
  year={2024}
}

@book{ImbensRubin2015,
  author    = {Imbens, Guido W. and Rubin, Donald B.},
  title     = {Causal Inference for Statistics, Social, and Biomedical Sciences: An Introduction},
  year      = {2015},
  publisher = {Cambridge University Press},
  address   = {Cambridge, UK}
}

@incollection{athey2017econometrics,
  title={The econometrics of randomized experiments},
  author={Athey, Susan and Imbens, Guido W},
  booktitle={Handbook of economic field experiments},
  volume={1},
  pages={73--140},
  year={2017},
  publisher={Elsevier}
}

@book{AngristPischke2009,
  author    = {Angrist, Joshua D. and Pischke, J{\"o}rn-Steffen},
  title     = {Mostly Harmless Econometrics: An Empiricist's Companion},
  year      = {2009},
  publisher = {Princeton University Press},
  address   = {Princeton, NJ}
}
\appendix
\counterwithin{figure}{section}
\counterwithin{table}{section}
\newpage
\section{Category, Product and Price Information}\label{appendix:category_and_product}
\begin{table}[htbp]
\tiny
\setlength{\tabcolsep}{4pt}  
\centering
\begin{threeparttable}

\caption{Categories, products and regular prices}
\label{tab:category_product}
\begin{tabular}{lp{12cm}r}
\toprule
Category & Product & Price (\$) \\
\midrule
Fruit Juice & \href{https://www.walmart.com/ip/110946086}{Capri Sun Variety Pack with Fruit Punch, Strawberry Kiwi \& Pacific Cooler Juice Box Pouches, 30 ct Box, 6 fl oz Pouches} & 9.43 \\
Fruit Drinks & \href{https://www.walmart.com/ip/558416376}{Kool Aid Jammers Variety Pack with Tropical Punch, Grape \& Cherry Kids Drink 0\% Juice Box Pouches, 30 Ct Box, 6 fl oz Pouches} & 7.27 \\
Baby Milk and Milk Flavoring & \href{https://www.walmart.com/ip/819219798}{Horizon Organic Shelf-Stable Whole Milk Boxes, 8 oz., 12 Pack} & 13.98 \\
Soup & \href{https://www.walmart.com/ip/10450904}{Maruchan Ramen Noodle Chicken Flavor Soup, 3 Oz, 12 Count Shelf Stable Package} & 9.97 \\
Cat Food - Wet Type & \href{https://www.walmart.com/ip/10534976}{Purina Fancy Feast Chicken Feast Classic Grain Free Wet Cat Food Pate - 3 oz. Can} & 0.88 \\
Pet Supplies - Dog Food & \href{https://www.walmart.com/ip/1446707820}{Purina Dog Chow Complete, Dry Dog Food for Adult Dogs High Protein, Real Chicken, 44 lb Bag} & 29.17 \\
Snacks - Potato Chips & \href{https://www.walmart.com/ip/677669806}{Lay's Classic Potato Snack Chips, Party Size, 13 oz Bag} & 5.44 \\
Snacks - Tortilla Chips & \href{https://www.walmart.com/ip/433078517}{Doritos Nacho Cheese Tortilla Snack Chips, Party Size, 14.5 oz Bag} & 5.94 \\
Cereal - Ready to Eat& \href{https://www.walmart.com/ip/207302221}{Cinnamon Toast Crunch Breakfast Cereal, Crispy Cinnamon Cereal, Family Size, 18.8 oz} & 4.93 \\
Cookies & \href{https://www.walmart.com/ip/10295206}{Little Debbie Oatmeal Creme Pies, 12 ct, 16.2 oz} & 2.68 \\
Ground and Whole Bean Coffee & \href{https://www.walmart.com/ip/971362035}{Folgers Classic Roast Ground Coffee, Medium Roast, 40.3-Ounce Canister} & 13.24 \\
Soft Drinks - Carbonated & \href{https://www.walmart.com/ip/12166733}{Coca-Cola Soda Pop, 12 fl oz, 12 Pack Cans} & 8.26 \\
Bottled Water & \href{https://www.walmart.com/ip/376302514}{OZARKA Brand 100\% Natural Spring Water, 16.9-ounce plastic bottles (Pack of 35)} & 19.96 \\
Candy - Chocolate & \href{https://www.walmart.com/ip/Hershey-s-Individually-Wrapped-Bars-Milk-Chocolate-6-ct/10452239}{Hershey's Milk Chocolate Candy, Bars 1.55 oz, 6 Count} & 6.48 \\
Candy - Non-Chocolate & \href{https://www.walmart.com/ip/52796439}{HARIBO Goldbears Original Gummy Bears, 28.8oz Stand Up Bag} & 6.48 \\
Soft Drinks - Low Calorie & \href{https://www.walmart.com/ip/10535134}{Coca-Cola Zero Sugar Soda Pop, 16.9 fl oz, 6 Pack Cans} & 5.18 \\
Frozen Italian Entrees & \href{https://www.walmart.com/ip/14940646}{Smart Ones Three Cheese Ziti Marinara Frozen Meal, 9 Oz Box} & 2.26 \\
Frozen Foods & \href{https://www.walmart.com/ip/36618723}{Great Value All Natural Chicken Wing Sections, 4 lb (Frozen)} & 12.98 \\
Ice Cream & \href{https://www.walmart.com/ip/13281759}{Haagen Dazs Coffee Ice Cream, Gluten Free, Kosher, 14.0 oz} & 4.18 \\
Frozen Novelties & \href{https://www.walmart.com/ip/50863392}{Pop-Ice Assorted Fruit Freezer Ice Pops, Gluten-Free Snack, 1.5 oz, 80 Count Fruit Pops} & 6.17 \\
Lunchmeat - Sliced - Refrigerated & \href{https://www.walmart.com/ip/10292642}{Oscar Mayer Chopped Ham \& Water product Deli Lunch Meat, 16 Oz Package} & 4.33 \\
Frankfurters - Refrigerated & \href{https://www.walmart.com/ip/39976745}{Oscar Mayer Classic Uncured Beef Franks Hot Dogs, 10 ct Pack} & 3.94 \\
Refrigerated Bacon & \href{https://www.walmart.com/ip/21268963}{Oscar Mayer Fully Cooked Original Bacon, 2.52 oz Box} & 4.27 \\
Refrigerated Entrees & \href{https://www.walmart.com/ip/38227609}{John Soules Foods Chicken Breast Fajita Strips, Refrigerated, 16oz, 18g Protein per 3oz Serving Size} & 5.98 \\
Dairy Products & \href{https://www.walmart.com/ip/10291058}{Land O Lakes Salted Stick Butter, 16 oz, 4 Sticks} & 5.28 \\
Yogurt - Refrigerated & \href{https://www.walmart.com/ip/21291512}{Chobani Non-Fat Greek Yogurt, Vanilla Blended 32 oz, Plastic} & 5.58 \\
Refrigerated Deli Meats & \href{https://www.walmart.com/ip/49342259}{Goya Cooked Ham 16 oz} & 29.99 \\
Dairy - Milk - Refrigerated & \href{https://www.walmart.com/ip/10450114}{Great Value Milk Whole Vitamin D Gallon} & 3.92 \\
Bakery - Fresh Cakes & \href{https://www.walmart.com/ip/16777549}{Little Debbie Zebra Cakes, 13 oz} & 2.68 \\
Fresh Eggs & \href{https://www.walmart.com/ip/10324126}{Eggland's Best Classic Extra Large White Eggs, 12 count} & 3.18 \\
Fresh Fruit & \href{https://www.walmart.com/ip/44390957}{Fresh Raspberries, 12 oz Container} & 4.74 \\
Beer & \href{https://www.walmart.com/ip/22563370}{Stella Artois Lager, 12 Pack, 11.2 fl oz Glass Bottles, 5\% ABV, Domestic Beer} & 15.73 \\
Light Beer (Low Calorie/Alcohol)& \href{https://www.walmart.com/ip/Bud-Light-Beer-24-Pack-12-fl-oz-Aluminum-Cans-4-2-ABV-Domestic-Lager/10984488}{Bud Light Beer, 24 Pack, 12 fl oz Aluminum Cans, 4.2\% ABV, Domestic Lager} & 20.98 \\
Detergents - Heavy Duty - Liquid & \href{https://www.walmart.com/ip/30260539}{Purex Liquid Laundry Detergent Plus OXI, Stain Defense Technology, 128 Fluid Ounces, 85 Wash Loads} & 9.97 \\
Cleaning Supplies & \href{https://www.walmart.com/ip/10291025}{ARM \& HAMMER Pure Baking Soda, For Baking, Cleaning \& Deodorizing, 1 lb Box} & 1.54 \\
Toilet Tissue & \href{https://www.walmart.com/ip/708542578}{Angel Soft Toilet Paper, 9 Mega Rolls, Soft and Strong Toilet Tissue} & 6.68 \\
Paper Towels & \href{https://www.walmart.com/ip/2101922242}{Bounty Select-a-Size Paper Towels, 12 Double Rolls, White} & 22.18 \\
Batteries & \href{https://www.walmart.com/ip/16663048}{Duracell Coppertop AA Battery, Long Lasting Double A Batteries, 16 Pack} & 15.97 \\
Pain Remedies - Headache & \href{https://www.walmart.com/ip/Tylenol-Extra-Strength-Caplets-100/893337}{Tylenol Extra Strength Caplets with 500 mg Acetaminophen, 100 Ct} & 10.97 \\
Cold Remedies -Adult & \href{https://www.walmart.com/ip/Equate-Value-Size-Honey-Lemon-Cough-Drops-with-Menthol-160-Count/54320956}{Equate Value Size Honey Lemon Cough Drops with Menthol, 160 Count} & 4.68 \\
\bottomrule
\end{tabular}
\begin{minipage}{\linewidth}
\footnotesize
\vspace{0.2cm}
\textit{Note:} The 40 categories were taken from \cite{dellavigna_uniform_2019} and the top products and their regular prices from Walmart.com as of April, 2024. Due to the dynamic nature of retail pricing, actual prices may vary.
\end{minipage}
\end{threeparttable}

\end{table}

\pagebreak

\section{Simple prompts for eliciting correlation}\label{appendix:vanilla_correlation_prompt}

Prompts \ref{prompt:competing_price} and \ref{prompt:expiration_days} document the prompts used to elicit the correlations between randomized price and competing price/expiration days shown in Figure \ref{fig:vanilla_correlation_key_variables}. 

\begin{prompt}[Correlation between price and competing price]\label{prompt:competing_price}
\textbf{System: } You, AI, are a customer. Your task is to fill in the blanks \textunderscore \textunderscore.        
                Return the completed information in comma-separated values, without any extra text.
                
\textbf{User: } Please consider the following product category: \{category\}. 

Suppose you are in a grocery store, and you see the following product in that category: \{product\}. 

The product is currently priced at: \textcolor{red}{\{price\}}. The price of a similar competing product from a different brand is \underline{\hspace{1cm}} [a number with up to 2 decimal points].

Return example 1: XX.XX
\end{prompt}

\begin{prompt}[Correlation between price and expiration days]\label{prompt:expiration_days}
\textbf{System: } You, AI, are a customer. Your task is to fill in the blanks \textunderscore \textunderscore.        
                Return the completed information in comma-separated values, without any extra text.
                
\textbf{User: } Please consider the following product category: \{category\}. 

Suppose you are in a grocery store, and you see the following product in that category: \{product\}. 

The product is currently priced at: \textcolor{red}{\{price\}}. The expiration date of the product is \underline{\hspace{1cm}} [a whole number] days from now.

Return example 1: 10
\end{prompt}

\newpage
\section{Covariance of unspecified variables}\label{appendix:covariance}

This section demonstrates how prevalent variables covary with each other, for which we use a more detailed prompt \ref{prompt:demonstrate_correlation} that elicits demographic, contextual, and pre-treatment variables. Figure \ref{fig:covariance_matrix} summarizes the covariance matrix between these variables. On one hand, this covariance matrix demonstrates that the LLM is capable of simulating many realistic correlations between covariates. For example, household income is positively correlated with budget. However, several covariates are correlated with the price change variable. Specifically, the focal price is correlated with product characteristics such as expiration days of the product, contextual variables such as competing prices, and pre-treatment variables such as purchase history. In an experiment with humans, these variables would either be held constant or be randomly distributed across treatment conditions in a way that is orthogonal to the treatment. 

\begin{figure}[htbp!]
    \centering
    \includegraphics[scale = 0.8]{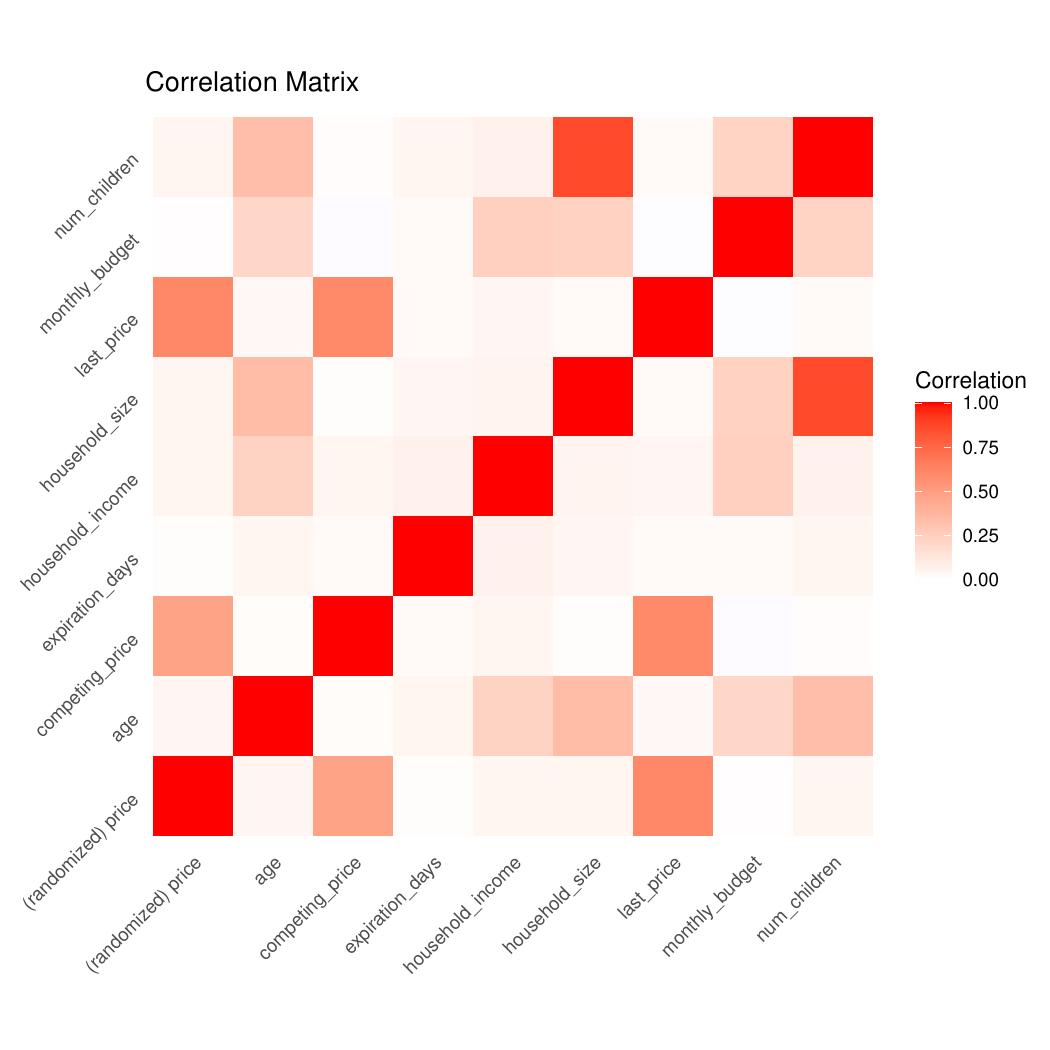}
    \caption{Covariance matrix of demographic, contextual, and pre-treatment variables}\label{fig:covariance_matrix}
\end{figure}

\begin{figure}[htbp!]
\begin{prompt}[Correlation between multiple variables]\label{prompt:demonstrate_correlation}
\textbf{System:} You, AI, are a customer. Your task is to fill in the blanks \rule{1cm}{0.15mm}.
Return the completed information in comma-separated values, without any extra text.\\
\textbf{User:} You are a consumer with the following characteristics:\\
Age: \rule{1cm}{0.15mm} [a whole number]\\
Gender: \rule{1cm}{0.15mm}\\
Education level: \rule{1cm}{0.15mm}\\
Household income: \rule{1cm}{0.15mm} [a whole number]\\
Occupation: \rule{1cm}{0.15mm}\\
Ethnicity: \rule{1cm}{0.15mm}\\
Marital status: \rule{1cm}{0.15mm}\\
Household size: \rule{1cm}{0.15mm} [a whole number]\\
Number of children: \rule{1cm}{0.15mm} [a whole number]\\
State of residence: \rule{1cm}{0.15mm} [state]\\
Home ownership: \rule{1cm}{0.15mm} [e.g., ``own,'' ``rent'']

Please consider the following product category: \textbf{\{category\}}.\

Suppose you are in a grocery store, and you see the following product in that category: \textbf{\{product\}}.

You have purchased this product \rule{1cm}{0.15mm} [e.g., ``frequently,'' ``occasionally,'' ``rarely''] in the past. The last time you saw this product, it was priced at \$\rule{1cm}{0.15mm} [a number with up to 2 decimal points]. You have \rule{1cm}{0.15mm} [e.g., ``a lot of,'' ``some,'' ``limited''] storage space at home. Your monthly grocery budget is \rule{1cm}{0.15mm} [a whole number].

The expiration date of the product is \rule{1cm}{0.15mm} [a whole number] days from now. The product is currently priced at \$\textbf{\{Price\}}. A similar competing product from a different brand is priced at \$\rule{1cm}{0.15mm} [a number with up to 2 decimal points]. 

Would you or would you not purchase the product? \rule{1cm}{0.15mm} [``purchase'' or ``not purchase'']

Return example: 35, female, bachelor's degree, 50000, software engineer, Asian, married, 3, 1, \\
\text{CA, own, occasionally, 2.99, some, 800, 60, 3.49, purchase}
\end{prompt}
\caption{Eliciting correlation between relevant variables}
\end{figure}

\section{Simulation that controls for demographic variables}\label{appendix:persona}

We use Prompt \ref{prompt:control_demographic} to first generate the distribution of demographic variables and then control for demographic variables. First, for each \{category,product\}, we leave the demographic variables as blank, and use the pinned version of gpt-4o-mini-2024-07-18 to generate 500 replications (personas) with a temperature of 1, giving us the joint distribution of demographic variables for that product. See Prompt \ref{prompt:control_demographic}. To avoid imposing any prior assumption on Price, we leave \{Price\} as " \rule{1cm}{0.15mm} [a number with up to 2 decimal points]" in this step. Then in the second step, we use the same prompt template but with the demographic variables filled in, i.e., all variables except the purchase decision filled in and the price being randomized. For each \{category,product\} and for each persona for that product, we vary the price, and elicit the purchase decision, while setting the temperature to 0, and then aggregate across the 500 personas to calculate the purchase probability. Then we aggregate across all products at the same relative price to construct the aggregate demand curve. 

\begin{prompt}[Control for demographic variables]\label{prompt:control_demographic}
\textbf{System:} You, AI, are a customer. Your task is to fill in the blanks \rule{1cm}{0.15mm}.
Return the completed information in comma-separated values, without any extra text.\\
\textbf{User:} You are a consumer with the following characteristics:\\
Age: \rule{1cm}{0.15mm} [a whole number]\\
Gender: \rule{1cm}{0.15mm}\\
Education level: \rule{1cm}{0.15mm}\\
Household income: \rule{1cm}{0.15mm} [a whole number]\\
Occupation: \rule{1cm}{0.15mm}\\
Ethnicity: \rule{1cm}{0.15mm}\\
Marital status: \rule{1cm}{0.15mm}\\
Household size: \rule{1cm}{0.15mm} [a whole number]\\
Number of children: \rule{1cm}{0.15mm} [a whole number]\\
State of residence: \rule{1cm}{0.15mm} [state]\\
Home ownership: \rule{1cm}{0.15mm} [e.g., ``own,'' ``rent'']

Please consider the following product category: \textbf{\{category\}}.

Suppose you are in a grocery store, and you see the following product in that category: \textbf{\{product\}}.

The product is currently priced at \rule{1cm}{0.15mm} [a number with up to 2 decimal points].

Would you or would you not purchase \textbf{\{product\}}? \rule{1cm}{0.15mm} [``purchase'' or ``not purchase'']\\
Return example: \text{35, female, bachelor's degree, 50000, software engineer, Asian,married, 3, 1, }\\
\text{CA, own, 3.99, purchase}
\end{prompt}

\subsection{Simulation that also controls for competing price}\label{appendix:competing_price_focalism}

To illustrate the unintended effect of adding covariates in simulation, we focus on a product, Coca-cola, and asks what happens to the purchase probability when the price of a competing product is set at a certain level. We follow the exact same two-step procedure as Web Appendix \ref{appendix:persona}, except that we have fixed the price of a competing product to be the same as the regular price of the focal product when eliciting purchase decisions from the 500 personas. 

\begin{prompt}[Control for demographic variables+ competing price]\label{prompt:control_demographic_and_competing_price}
\textbf{System:} You, AI, are a customer. Your task is to fill in the blanks \rule{1cm}{0.15mm}.
Return the completed information in comma-separated values, without any extra text.\\
\textbf{User:} You are a consumer with the following characteristics:\\
Age: \{age\}\\
Gender: \{gender\}\\
Education level: \{education\}\\
Household income: \{income\}\\
Occupation: \{occupation\}\\
Ethnicity: \{ethnicity\}\\
Marital status: \{marital status\}\\
Household size: \{household size\}\\
Number of children: \{number of children\}\\
State of residence: \{state\}\\
Home ownership: \{home ownership\}

Please consider the following product category: Soft Drinks - Carbonated.

Suppose you are in a grocery store, and you see the following product in that category: Coca-Cola Soda Pop, 12 fl oz, 12 Pack Cans.

The product is currently priced at \$\textbf{\{Price\}}. 
A similar competing product from a different brand is priced at \$8.26.

Would you or would you not purchase Coca-Cola Soda Pop, 12 fl oz, 12 Pack Cans? \rule{1cm}{0.15mm} [``purchase'' or ``not purchase'']\\

Return example: purchase

\end{prompt}

This mimics the scenario of an actual store, where the price of a competing product is set uniformly at a certain level. Different customers, each with their own characteristics, will come to the store and make their purchase decision. While it is plausible that some customers may notice the price of the competing product and use it as a reference point to make purchase decisions, our step-wise non-smooth demand curve is implausible in real-world scenarios where not all customers are attentive or are aware of the price of all other competing products. 

\section{Non-monotonic impact of controlling for covariates}\label{appendix:sequential_covariate_addition}

Table \ref{tab:variable_progression} gives an overview of the variables used and the corresponding results documented in Section \ref{sec:control}. 
 
\begin{table}[htbp!]
    \centering
    \caption{Sequential Covariate Addition in Sensitivity Analysis}
    \label{tab:variable_progression}
    \begin{tabular}{cccl c}
\toprule
Chunks & Total & New & New Covariates & MAE \\
 & Covariates & Covariates & (Names) & \\
\midrule
1 & 14 & 14 & Age, US Citizenship, Education Level, Employmen... & 0.2250 \\
2 & 15 & 1 & Tightwad-Spendthrift Score & 0.0971 \\
3 & 17 & 2 & Discount Rate, Present Bias & 0.0989 \\
4 & 18 & 1 & Risk Aversion & 0.1413 \\
5 & 19 & 1 & Loss Aversion & 0.1478 \\
6 & 20 & 1 & Financial Literacy & 0.1524 \\
7 & 21 & 1 & Numeracy & 0.1714 \\
8 & 22 & 1 & score mentalaccounting & 0.1646 \\
9 & 23 & 1 & score maximization & 0.1700 \\
10 & 24 & 1 & score minimalism & 0.1842 \\
11 & 25 & 1 & score GREEN & 0.2025 \\
12 & 30 & 5 & score agreeableness, score extraversion, score ... & 0.1929 \\
\bottomrule
\end{tabular}

    \begin{tablenotes}
    \small
    \item \textit{Note:} This table presents the sequential addition of covariates in the sensitivity analysis. Stage 1 includes sociodemographic
    characteristics (age, citizen status, education, employment status, household size, marriage, political affiliation, political views, race, region,
    religion, religious attendance, sex, and total family income). Stages 2-11 add individual behavioral economics and psychological measures. Stage 12 adds
     Big Five personality traits (agreeableness, extraversion, neuroticism, openness, conscientiousness). Mean Absolute Error (MAE) represents prediction
    error at each stage.
    \end{tablenotes}
    \end{table}

Prompt \ref{prompt:full_persona_part1} shows the final template that includes all the covariates we tested at Stage 12. At stage 1, only the demographic variables (the first 14 items) were included. Each subsequent stage systematically added behavioral and psychological measures: tightwad-spendthrift (stage 2), temporal discounting (stage 3), risk aversion (stage 4), loss aversion (stage 5), financial literacy (stage 6), numeracy (stage 7), mental accounting (stage 8), maximization (stage 9), minimalism (stage 10), environmental attitudes (stage 11), and finally the Big Five personality traits (stage 12).

\begin{prompt}[Prompt Templates Based on Detailed Covariates: Part I]\label{prompt:full_persona_part1}
\scriptsize
\textbf{System: } You, AI, are a customer. Your task is to fill in the blanks \rule{1cm}{0.15mm}.
Return the completed information in comma-separated values, without any extra text.\\

\textbf{User:} You are a consumer with the following characteristics:

\# Demographics:\\
- Geographic region: \{region\}\\
- Gender: \{sex\}\\
- Age: \{age\}\\
- Education level: \{education\}\\
- Race: \{race\}\\
- Citizen of the US: \{citizen\_status\}\\
- Marital status: \{marriage\}\\
- Religion: \{religion\}\\
- Religious attendance: \{religious\_attendance\}\\
- Political affiliation: \{political\_affiliation\}\\
- Income: \{total\_family\_income\}\\
- Political views: \{political\_views\}\\
- Household size: \{household\_size\}\\
- Employment status: \{employment\_status\}

\# Tightwad-Spendthrift: \{score\_ST-TW\} (\{pct\_spendthrift\} percentile)\\
\textit{$<$note: The score ranges from 4 to 26. Lower scores (4-11) indicate difficulty spending money, while higher scores (19-26) indicate difficulty controlling spending.$>$}

\# Discount, Present Bias:\\
\textit{$<$note: These are implied rates computed from your time-value of money preferences. Higher values of the discount rate imply greater impatience. Higher values of present bias imply greater departure from normative economic behavior.$>$}\\
- Discount: \{score\_discount\} (\{pct\_discount\} percentile)\\
- Present Bias: \{score\_presentbias\} (\{pct\_presentbias\} percentile)

\# Risk aversion: \{score\_riskaversion\} (\{pct\_riskaversion\} percentile)\\
\textit{$<$note: Higher scores indicate a greater tendency for risk aversion in a choice between a sure-amount and lottery payout.$>$}
\end{prompt}

\addtocounter{promptcounter}{-1}
\begin{prompt}[Prompt Templates Based on Detailed Covariates: Part II]\label{prompt:full_persona_part2}
\scriptsize
\# Loss aversion: \{score\_lossaversion\} (\{pct\_lossaversion\} percentile)\\
\textit{$<$note: Higher scores indicate a greater tendency for loss aversion in a choice between a sure-amount and a lottery payout.$>$}

\# Financial Literacy: \{score\_finliteracy\} (\{pct\_finliteracy\} percentile)\\
\textit{$<$note: The score ranges from 0 to 8, and a higher score indicates you correctly answered more questions related to general financial literacy.$>$}

\# Numeracy: \{score\_numeracy\} (\{pct\_numeracy\} percentile)\\
\textit{$<$note: The score ranges from 0 to 8, and a higher score indicates you correctly answered more questions related to numeracy.$>$}

\# Mental Accounting: \{score\_mentalaccounting\} (\{pct\_mentalaccounting\} percentile)\\
\textit{$<$note: The score ranges from 0 to 100 percent, and higher scores indicate a greater adherence to the principles of mental accounting proposed by Thaler: segregate gains, integrate losses, segregate a small gain from a large loss, and integrate a small loss with a large gain.$>$}

\# Maximization: \{score\_maximization\} (\{pct\_maximization\} percentile)\\
\textit{$<$note: The score ranges from 1 to 5, and higher scores indicate a tendency to optimize rather than satisfice when making decisions.$>$}

\# Minimalism: \{score\_minimalism\} (\{pct\_minimalism\} percentile)\\
\textit{$<$note: The score ranges from 1 to 5, and a higher score indicates a higher preference for minimalism.$>$}

\# GREEN: \{score\_GREEN\} (\{pct\_green\} percentile)\\
\textit{$<$note: The score ranges from 1 to 5, and higher scores indicate a higher affinity for environmentalism.$>$}

\# Big 5 Personality:\\
\textit{$<$note: Openness reflects curiosity and receptiveness to new experiences, Conscientiousness indicates self-discipline and goal-directed behavior, Extraversion measures sociability and assertiveness, Agreeableness reflects compassion and cooperativeness, and Neuroticism captures emotional instability and susceptibility to negative emotions. Each score ranges from 1 to 5, and a higher score indicates a greater display of the associated traits.$>$}\\
- Extraversion: \{score\_extraversion\} (\{pct\_extraversion\} percentile)\\
- Agreeableness: \{score\_agreeableness\} (\{pct\_agreeableness\} percentile)\\
- Conscientiousness: \{wave1\_score\_conscientiousness\} (\{pct\_conscientiousness\} percentile)\\
- Openness: \{score\_openness\} (\{pct\_openness\} percentile)\\
- Neuroticism: \{score\_neuroticism\} (\{pct\_neuroticism\} percentile)
\end{prompt}

\section{Theoretical Framework}

\subsection{Impossibility of Ambiguous Prompting}\label{appendix:theory}

\begin{theorem}\label{theorem}[Impossibility Theorem for Ambiguous Prompting Strategies]
For a given ambiguous prompting strategy, there exists no LLM \( f \) that can correctly answer all questions correctly. 
\end{theorem}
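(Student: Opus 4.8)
The plan is to exploit the fact that an LLM only ever observes the prompt, never the underlying question, so that its behavior factors through the prompting strategy $s$. First I would fix the model: an LLM is a map $f: \mathcal{P} \to \mathcal{A}$ from prompts to an answer space $\mathcal{A}$, and I would say that $f$ \emph{correctly answers} a question $q$ under strategy $s$ precisely when $f(s(q))$ equals the correct answer for $q$. The entire force of the theorem lives in this definition: because $f$ accepts a prompt as its only argument, the answer returned for $q$ can depend on $q$ solely through $s(q)$.

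Next I would invoke the hypothesis that $s$ is ambiguous. By the definition of an ambiguous prompting strategy, there exist distinct questions $q_1$ and $q_2$ that are mapped to a common prompt, $s(q_1) = s(q_2) = p$, yet have distinct correct answers $a_1 \neq a_2$. The key step is then a one-line observation about composition: evaluating the LLM on both questions gives $f(s(q_1)) = f(p) = f(s(q_2))$, so $f$ necessarily returns the \emph{same} element of $\mathcal{A}$ for $q_1$ and for $q_2$. Since one element cannot simultaneously equal two distinct targets $a_1$ and $a_2$, at least one of the equalities $f(s(q_1)) = a_1$ and $f(s(q_2)) = a_2$ must fail. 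Hence no $f$ answers both $q_1$ and $q_2$ correctly, and a fortiori none answers \emph{all} questions correctly, which is the desired contradiction.

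I do not expect a genuine technical obstacle: the argument is essentially the pigeonhole principle applied to the composition $f \circ s$, which is constant on the pair $\{q_1, q_2\}$. The only real subtlety, and the part I would state most carefully, is the modeling assumption that $f$ has no access to $q$ beyond the prompt $s(q)$, since this information bottleneck is exactly what produces the impossibility; were the LLM allowed to see $q$ directly the claim would be false. For completeness I would also record that the conclusion survives the stochastic nature of real LLMs: if $f(p)$ is instead a distribution over answers, then $q_1$ and $q_2$ induce an identical output distribution, so $f$ cannot place full probability mass on $a_1$ and on $a_2$ at once, and any sensible notion of ``correctly answering'' fails for at least one of them.
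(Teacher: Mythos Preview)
Your proposal is correct and follows essentially the same argument as the paper: fix the ambiguous pair $q_1,q_2$ with $s(q_1)=s(q_2)=p$ and distinct correct answers $a_1\neq a_2$, observe that $f(s(q_1))=f(p)=f(s(q_2))$, and conclude that $f$ cannot hit both targets. The paper's version is a two-sentence direct argument without the additional framing about the information bottleneck or the stochastic extension, but the logical core is identical.
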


\begin{proof}
Assume that \( s(q_1) = s(q_2) = p \) for two distinct questions \( q_1 \) and \( q_2 \) that have different correct answers $a_1$ and $a_2$, with $a_1 \neq a_2$. If the model can correctly answer $q_1$, such that $f(s(q_1)) = f(p) = a_1$, then the model cannot answer $q_2$ because $f(s(q_2)) = f(p) = a_1 \neq a_2$.
\end{proof}

\subsection{Illustration of the underlying DGP that LLM simulation mimics if prompts are interpreted differently}\label{appendix:dgp_illustration}

Consider a true data-generating process (DGP) described by a linear structural equation:
$$
Y = \beta_M D + U
$$
where $\beta_M$ represents the causal effect of $D$ on $Y$ through a set of realistic mechanism $M$, and $U$ represents unobserved factors. In a well-designed experiment, we have $\text{Cov}(U, D) = 0$, making it possible to identify $\beta_M$ and estimate $P(Y|do(D))$ from data on $(Y, D)$ alone. But under a blinded design with an ambiguous prompt, the LLM does not know it should mimic an experimental environment. That is, the LLM does not have enough information to draw $U$ from the appropriate distribution (literally or figuratively - we do not attempt to model how the LLM makes inference but rather present a conceptual model). If the model is mostly trained on observational data, it is mostly likely going to mimic observational data patterns, where $\text{Cov}(U, D) \neq 0$. As a result, the LLM's responses will reflect confounded relationships rather than the intended causal effects. Figure \ref{fig:simple_dgp} summarizes two possible DGPs mimicked by the LLM under this simple ambiguous prompt.

\begin{figure}[htbp]
\centering
\begin{tikzpicture}[
    node distance=1.5cm,
    every node/.style={draw, minimum size=8mm},
    >=stealth
]
    
    \begin{scope}[xshift=0cm]
        \node (D1) at (0,0) {$D$};
        \node (M1) at (2,0) {$M$};
        \node (Y1) at (4,0) {$Y$};
        \node (U1) at (2,2) {$U$};
        
        \draw[->] (D1) -- (M1);
        \draw[->] (M1) -- (Y1);
        \draw[->] (U1) -- (Y1);
        
        \node[above=0.3cm of U1, draw=none] {(a) DGP for Experimental Data};
    \end{scope}

    \begin{scope}[xshift=8cm]
        \node (D2) at (0,0) {$D$};
        \node (M2) at (2,0) {$M$};
        \node (Y2) at (4,0) {$Y$};
        \node (U2) at (2,2) {$U$};
        
        \draw[->] (D2) -- (M2);
        \draw[->] (M2) -- (Y2);
        \draw[->] (U2) -- (Y2);
        \draw[->] (U2) -- (D2);
        
        \node[above=0.3cm of U2, draw=none] {(b) DGP for Observational Data};
    \end{scope}
\end{tikzpicture}

\caption{Possible DGPs mimicked by LLM simulations under ambiguous simple prompts}
\label{fig:simple_dgp}
\end{figure}

As discussed in Section \ref{sec:control}, one might attempt to address this issue by adding more variables $X$ to the prompt. In principle, if $\text{Cov}(U, D|X) = 0$, then controlling for $X$ can help restore identification. However, simply specifying covariates in the LLM prompt does more than condition on them statistically. Including $X$ may cause the simulation to alter the underlying decision-making mechanisms $M$ and shift toward a new set of mechanisms $M'$ and produce an environment that differs substantially from the original scenario -- the estimated parameter $\beta_{M'}$ may have limited ecological validity, since it no longer reflects the realistic mechanisms $M$ that were originally of interest (as we illustrated in Figure \ref{fig:fixed_competing_last_price}). Figure \ref{fig:complex_dgp} summarizes two possible DGPs mimicked by the LLM under this more complicated ambiguous prompt. 

\begin{figure}[htbp]
\centering
\begin{tikzpicture}[
    node distance=1.5cm,
    every node/.style={draw, minimum size=8mm},
    >=stealth
]
    
    \begin{scope}[xshift=0cm]
        \node (D3) at (0,0) {$D$};
        \node (M3) at (2,0) {$M$};
        \node (Y3) at (4,0) {$Y$};
        \node (U3) at (2,2) {$U$};
        \node (X3) at (1,1) {$X$};
        
        \draw[->] (D3) -- (M3);
        \draw[->] (M3) -- (Y3);
        \draw[->] (U3) -- (Y3);
        \draw[->] (U3) -- (X3);
        \draw[->] (X3) -- (D3);
        
        \node[above=0.3cm of U3, draw=none] {(a) Adding $X$ may address confounding};
    \end{scope}

    \begin{scope}[xshift=8cm]
        \node (D4) at (0,0) {$D$};
        \node (M4) at (2,0) {$M$};
        \node (Y4) at (4,0) {$Y$};
        \node (U4) at (2,2) {$U$};
        \node (X4) at (1,1) {$X$};
        
        \draw[->] (D4) -- (M4);
        \draw[->] (M4) -- (Y4);
        \draw[->] (U4) -- (Y4);
        \draw[->] (U4) -- (X4);
        \draw[->] (X4) -- (D4);
        \draw[->] (X4) -- (M4);
        
        \node[above=0.3cm of U4, draw=none] {(b) Adding $X$ may also sacrifice ecological validity};
    \end{scope}
\end{tikzpicture}
\caption{Potential DGPs mimicked by LLM simulations when controlling for additional variables}
\label{fig:complex_dgp}
\end{figure}

\subsection{Implications for Model Development}\label{sec:fine-tuning}

The impossibility theorem extends to fine-tuning and RAG approaches. Consider experimental data where Coca-Cola prices were randomized while Pepsi prices remained fixed. Fine-tuning on this data with ambiguous prompts cannot guarantee the model learns the causal relationship $\beta_M$. The model may learn the regression coefficient $\hat{b} = \beta_M + \text{Cov}(U,D)/\text{Var}(D)$ while maintaining a prior that $\text{Cov}(U,D) \neq 0$, leading to biased causal inference.

Moreover, even if the model correctly learns independence assumptions for the training scenario ($q_1$), it may fail to generalize appropriately to new scenarios ($q_2$). When asked about cases where Pepsi prices vary and Coca-Cola prices are fixed, the model might incorrectly maintain the independence assumption from training, when correlation with unobserved factors would actually exist. Therefore, while fine-tuning and RAG remain valuable for incorporating domain knowledge, they are complements to rather than substitutes for unambiguous prompt designs. 

\section{Fine-tuning with additional irrelevant data}\label{appendix:amazon_fine_tune}

To test the robustness of different fine-tuning strategy when additional dataset is included, we sample $10,000$ observations from the Amazon dataset \cite{berke2024open} and incorporate them into fine tuning using Prompt \ref{prompt:amazon_prompt}).

\begin{prompt}[Prompt for Amazon data fine-tuning]\label{prompt:amazon_prompt}
\textbf{System:} You, AI, are a customer. Your task is to fill in the blank \_\_\_. Return the completed information without extra text.\\
\textbf{User:} Consider the product category: \{category\}.

Suppose you are shopping online, and you see the product \{product\}.

The product is priced at \$\{price\}. You decide to \_\_\_ [``purchase'' or ``not purchase''].

Return example 1: purchase\\
Return example 2: not purchase
\end{prompt}

\end{document}